\documentclass{article}
\usepackage[T1]{fontenc}
\usepackage[english]{babel}
\usepackage[margin=1.1in]{geometry}
\usepackage[utf8]{inputenc}
\usepackage{lmodern}
\usepackage{microtype}

\usepackage{amsmath,amssymb,amsfonts,amsthm}
\usepackage{mathtools}
\usepackage{mathrsfs} 
\usepackage[square,comma,numbers]{natbib}
\usepackage[hidelinks]{hyperref}
\usepackage{graphicx}
\usepackage{enumerate}
\usepackage[font=small]{caption}
\usepackage{subcaption}
\usepackage{algorithm}
\usepackage[noend]{algpseudocode}
\usepackage[dvipsnames]{xcolor}
\usepackage{tikz}
\usetikzlibrary{arrows.meta}

\numberwithin{equation}{section}
\newtheorem{Thm}{Theorem}[section]
\newtheorem*{Thm*}{Theorem}
\newtheorem{Lem}[Thm]{Lemma}
\newtheorem{Asm}[Thm]{Assumption}
\newtheorem{Rmk}[Thm]{Remark} 
\newtheorem{Ex}[Thm]{Example} 

\newtheorem{Prop}[Thm]{Proposition}

\def\cB{{\mathcal B}}
\def\cC{{\mathcal C}}
\def\cE{{\mathcal E}}
\def\cF{{\mathcal F}}
\def\cG{{\mathcal G}}
\def\cP{{\mathcal P}}
\def\cX{{\mathcal X}}
\def\cU{{\mathcal U}}
\def\cS{{\mathcal S}}
\def\cT{{\mathcal T}}

\def\fd{\mathfrak{d}}
\def\fR{\mathfrak{R}}
\def\fg{\mathfrak{g}}
\def\fm{\mathfrak{m}}
\def\fz{\mathfrak{z}}

\def\E{\mathbb{E}}
\def\P{\mathbb{P}}
\def\R{\mathbb{R}}

\def\sF{{\mathscr{F}}}
\def\sG{\mathscr{G}}
\def\eps{\epsilon}

\newcommand{\one}{\mathbf 1}
\newcommand{\Proj}{\mathrm{Proj}}
\newcommand{\ip}[2]{\langle #1,#2\rangle}
\renewcommand{\d}{\mathrm{d}}

\usepackage{fancyhdr}
\title{Decentralized Bandit Learning of Markov Perfect Equilibria\footnote{Research 
is partially supported by the National Science Foundation grant DMS 2406762.}}
\author
{Felix H\"ofer\footnote{Booth School of Business, The University of Chicago, Chicago, IL, 60637, USA, email: 
{\tt felix.hoefer@chicagobooth.edu}}
\and H.~Mete Soner\footnote{ 
Department of Operations Research and Financial
Engineering, Princeton University, Princeton, NJ, 08540, USA, email: 
{\tt soner@princeton.edu}}}

\date{\today}

\begin{document}
\maketitle

\begin{abstract}
\noindent
We study decentralized bandit learning in discounted infinite-horizon, general-sum Markov games with continuous actions. Agents observe the public state, but only their own realized rewards, and use simultaneous perturbation stochastic approximation to update their actions. Under strong monotonicity of the negative running-reward gradients and sufficiently small time steps, we prove that the fully decentralized procedure converges almost surely to the unique stationary Markov perfect equilibrium. We obtain an $O(n^{-1/2})$ convergence rate and present a natural extension to episodic finite-horizon learning.
\end{abstract}
\vspace{3pt}

\noindent\textbf{Key words:} multi-agent reinforcement learning; decentralized learning;
bandit learning; stochastic approximation; Markov perfect equilibrium.
\vspace{4pt}

\noindent\textbf{Mathematics Subject Classifications:}
91A15, 91A25, 62L20, 93E35.

\section{Introduction}

We consider a discounted infinite-horizon general-sum Markov game. Agents construct strategies based only on observations of the current state, their own realized action, and their own realized reward. They do not know the objectives of the other agents and need not even be aware of their existence. We develop a fully decentralized, payoff-based learning procedure and prove its convergence to the unique stationary Markov perfect equilibrium (MPE).

Learning Nash equilibria is a classical and difficult problem. Among the classical
multi-agent reinforcement-learning (MARL) algorithms are Littman's minimax-Q \cite{Litt},
Hu and Wellman's Nash-Q \cite{HW}, and Bowling and
Veloso's variable-learning-rate method
\cite{BV}. In general games,
however, even natural learning dynamics may not converge, and global 
convergence guarantees
rely on additional structure, such as potential games or weakly acyclic games. 
In this paper, we establish conditions under which dynamic equilibria can be learned 
in both stationary infinite-horizon and finite-horizon Markov games.
Our structural condition is of a control-theoretic kind:
we impose strong monotonicity of the players' negative running reward gradients, which is equivalent to strong concavity of running rewards if they are independent of other players' actions. In the finite-horizon setting, 
H\"{o}fer, Soner, and Y{\i}lmaz \cite{HSY-MPE} showed that this condition ensures 
the uniqueness of the dynamic MPE for sufficiently small time steps, and H\"ofer \emph{et al.}~\cite{HLSY} 
use it to prove geometric convergence of model-based iterative schemes.

In our algorithm, agents update their strategies by stochastic gradient ascent, 
with the required gradient estimated by simultaneous perturbation 
stochastic approximation \cite{Spall,Spall1} from their private observations. 
 A perturbed action, the realized reward, and an estimate of the discounted 
 continuation value yield a one-point estimate of each agent's 
 pseudo-gradient. The continuation values are unknown and must be estimated as learning proceeds, 
 adding a difficulty specific to the dynamic setting.
Under strong monotonicity and a small time-step 
condition, the coupled policy and value updates converge almost surely to the 
unique stationary MPE.

Zhang and Mu~\cite{zhang2026stochastic} also study payoff-based equilibrium 
learning in discounted games by playing repeated terminating trials to estimate policy gradients. In contrast, our algorithm uses a single uninterrupted trajectory of the original game to learn both policies and continuation values simultaneously, and our policy and value updates do not require separate timescales. The recent paper of Zhang and Mazumdar~\cite{zhang2026provably} establishes convergence of learning dynamics in discounted Markov games to stationary risk-averse quantal-response equilibria, but requires that agents observe opponents' actions. By comparison, we learn ordinary MPE in continuous-action games from private bandit feedback alone.

The closest static counterparts are the papers of
Bravo, Leslie, and Mertikopoulos \cite{BLM} and 
Tatarenko and Kamgarpour \cite{TG}. Both study continuous-action monotone games
in which agents learn from realized payoffs: \cite{TG} uses a
distributed payoff-based procedure with coupling constraints, while
\cite{BLM} analyzes no-regret bandit learning. With slightly differing
assumptions and guarantees, they provide natural static benchmarks
for our work. In the strongly monotone static setting, Drusvyatskiy,
Fazel, and Ratliff \cite{DML} improve the rate obtained by \cite{BLM}, 
see also the convergence results of Ba \emph{et al.}~\cite{ba2025doubly}; 
we obtain the same order of mean-squared error in our dynamic setting.

Our dynamic setting introduces several additional layers. The target
is an MPE rather than the Nash equilibrium of a static game, and the
agents' objectives depend on continuation values which must be learned
along with their policies. At the same time, the observations come
from a single trajectory: actions affect future states, and the states
visited change as the policies evolve. The analysis must therefore
control the policy and value errors together, while  preserving the needed
monotone structure.

Mertikopoulos and Zhou \cite{MZ} provide a complementary static
comparison. They work under variational stability without requiring concavity,
but use noisy first-order feedback 
(i.e., an oracle providing noisy observations of the payoff gradient) 
rather than bandit feedback. They
prove global almost-sure convergence under global stability and
local convergence with high probability under local stability. In our setting,
the quantitative bound arising from the concavity assumption gives
global convergence from arbitrary initialization and helps control
the error from the evolving continuation estimates.

For Markov games, Arslan and Y\"uksel \cite{AY} and 
Yongacoglu, Arslan, and Y\"uksel \cite{YAY2,YAY1}
give decentralized $Q$-learning results for finite-action weakly acyclic and 
common-interest settings, see Ozdaglar \emph{et al.}~\cite{OSZ} for an overview. 
Their methods use exploration phases and comparisons among finitely many 
actions. Sayin \emph{et al.}~\cite{SZLBO} study decentralized learning in 
two-agent zero-sum games, while Maheshwari \emph{et al.}~\cite{MWPS}  
exploit the Markov-potential structure. Chen and Li \cite{CL} estimate gradients 
of an overall policy-performance criterion from trajectories in general-sum games. 
In our continuous-action general-sum setting, each agent estimates
the gradient of its Bellman operator with respect to its own action
from a private one-step observation.
The error due to the unknown continuation value is treated as a vanishing 
stochastic-approximation bias and controlled through a contraction 
of the coupled policy and value errors. 

Our updates and numerical 
sequences are synchronized across agents.  
Unsynchronized decentralized $Q$-learning has recently been studied 
 \cite{YAY1}, building on the classical asynchronous
stochastic approximation and $Q$-learning of Tsitsiklis \cite{T94}.
Motivated by \cite{YAY1}, a natural extension of our approach would 
allow agent-dependent clocks and step sizes.

We consider both finite-horizon and infinite-horizon discounted criteria
and obtain similar results in the two settings. To avoid repetition,
we first describe the game and algorithm in the infinite-horizon
setting. Subsection \ref{ss:episodic} introduces episodic
finite-horizon learning, and Section \ref{sec:episodic} establishes
its convergence.

\subsection{Markov game and the Nash system}
\label{ss:problem}

There are $N\ge1$ agents, indexed by $i\in[N]:=\{1,\ldots,N\}$, and decision periods $t=0,1,\ldots$, 
uniformly spaced with a common time step $h>0$. Agent $i$ has a nonempty compact convex action set $\cU^i\subset\R^{d_i}$ 
with nonempty interior. By an appropriate translation of the action set, we assume without 
loss of generality that $0$ lies in its interior.

Set $d=\sum_{i=1}^N d_i$ and
$$
\cU:=\prod_{i=1}^N \cU^i\subset\R^d,
\qquad a=(a^1,\ldots,a^N)\in\cU,
$$
and use $a^{-i}$ for the other agents' actions.  
Our analysis extends directly to state-dependent action spaces, which arise in some applications, but for the ease of exposition we refrain from this generality.

The \emph{observed joint} state process $(X_t)_{t\ge0}$ takes values in a finite set $\cX$. 
Let $\cP(\cX)$ be the set of all probabilities on $\cX$, and
given $x\in\cX$ and $a\in\cU$, the next state is drawn from a controlled transition kernel
$$
\P(\cdot \mid x,a)\in\cP(\cX).
$$
After the joint action $A_t$ is played at time $t$, 
agent $i$ receives a running reward  $\fR_{t+1}^i$,  together with the successor state $X_{t+1}$. 
We assume that 
$\E [\fR_{t+1}^i \mid X_t=x, A_t=a ] = h r^i(x,a)$ 
for some $r^i: \cX \times \cU \to \R$.
Agent $i$ uses a fixed and known discount factor $\beta^i\in(0,1)$. For a stationary Markov feedback profile $\alpha:\cX\to\cU$, define the expected reward 
\begin{equation*}
J^i(x;\alpha)
:=\E_x^\alpha\Big[\sum_{t=0}^{\infty}(\beta^i)^t\fR_{t+1}^i\Big]
=\E_x^\alpha\Big[\sum_{t=0}^{\infty}(\beta^i)^t hr^i(X_t,\alpha(X_t))\Big].
\end{equation*}
The objective is to \emph{maximize} this expected reward.
The expected reward function $r^i$ and the transition 
kernel $\P(\cdot \mid x,a)$ are unknown to the agents.  

For  $W=(W^i)_{i=1}^N:\cX\to\R^N$, define the \emph{Bellman operator} 
and its \emph{pseudo-gradient} by
$$
Q^i[W](x,a):=hr^i(x,a)+\beta^i \mathsf P [W^i](x,a) \in \R,
\qquad
\Phi[W](x,a):=\bigl(\nabla_{a^i}Q^i[W](x,a)\bigr)_{i=1}^N\in\R^d,
$$
where
$$
\mathsf P [f](x,a):=\sum_{y\in\cX}f(y)\P( y\mid x,a).
$$

A pair $(\alpha^\star,V^\star)$ is a (stationary) \emph{Markov perfect equilibrium} if $\alpha^\star$ is a stationary feedback profile and, for every $i\in[N]$, $x\in\cX$, and unilateral deviation $\pi^i$,
\begin{equation*}
V^{i,\star}(x)=J^i(x;\alpha^\star)
\ge J^i(x;(\pi^i,\alpha^{\star,-i})).
\end{equation*}
 Under the differentiability and monotonicity conditions imposed below, 
 the equilibrium $(\alpha^\star,V^\star)$
 is characterized by the stationary \emph{Nash system}:
\begin{equation}
V^{i,\star}(x)=Q^i[V^\star](x,\alpha^\star(x)),
\qquad \forall  x \in \cX,
\label{eq:equilibrium-value}
\end{equation}
\begin{equation}
\ip{\alpha-\alpha^\star(x)}{\Phi[V^\star](x,\alpha^\star(x))}
\le0,\qquad \forall \alpha\in\cU,\, x \in \cX.
\label{eq:equilibrium-VI}
\end{equation}
We note that in our setting, the variational inequality is equivalent to
$$
\alpha^{i,\star}(x)\in \ \mathop{\rm arg\,max}_{u^i\in \cU^i}
Q^i[V^\star](x,(u^i,\alpha^{\star,-i}(x))).
$$

\subsection{Algorithm}
\label{ss:algo}

The learning procedure operates along one uninterrupted trajectory. Learning iterations are indexed by $n\ge0$. At iteration $n$, the current state is $X_n$ and the transition produces updated tables with index $n+1$. The initial tables $\widehat\alpha^0$ and $\widehat V^0$ are given and deterministic. The initial state $X_0$ is a given arbitrary random variable in $\cX$. {Agent} $i$ maintains a stationary policy table $\widehat\alpha^{i,n}(x)$ and a value table $\widehat V^{i,n}(x)$. The algorithm uses sequences $(\eta_n,\nu_n,\lambda_n,\delta_n)_{n\ge0}$ which are assumed to satisfy Assumption \ref{asm:observations} below.

\paragraph{Step 1. Play one action.}
At iteration $n\ge0$, agent $i$ plays the perturbed action
\begin{equation}
A_n^i:=(1-\lambda_n)\widehat\alpha^{i,n}(X_n)
       +\delta_n\Delta_n^i\in\cU^i,
\label{eq:played-action}
\end{equation}
where the private perturbations $\Delta_n^i$ are conditionally
independent across agents. 

\paragraph{Step 2. Form the estimates.}
After observing its private running reward $\fR_{n+1}^i$ and the successor state $X_{n+1}$
generated by the joint action $A_n=(A_n^1,\ldots,A_n^N)$, agent $i$ forms
\begin{equation}
\widetilde V^{i,n+1}
   :=\fR_{n+1}^i+\beta^i\widehat V^{i,n}(X_{n+1}),
\qquad
\widehat D^{i,n+1}
   :=\frac{d_i}{\delta_n}\widetilde V^{i,n+1} \Delta_n^i.
\label{eq:bellman-sample}
\end{equation}
Thus, $\widetilde V^{i,n+1}$ is the sample used to update the value function estimate, 
and $\widehat D^{i,n+1}$ is a one-point estimate of agent $i$’s 
component of the pseudo-gradient used in the stochastic gradient ascent step.

The random perturbation gives a one-point estimate of the action
gradient \cite{Spall,Spall1}, with the factor $d_i$ 
accounting for the action dimension.

\paragraph{Step 3. Update the visited table entries.}
Set
\begin{align}
\widehat\alpha^{i,n+1}(X_n)
=\Proj_{\cU^i}\bigl(\widehat\alpha^{i,n}(X_n)+\eta_n\widehat D^{i,n+1}\bigr),
\quad
\widehat V^{i,n+1}(X_n)
=(1-\nu_n)\widehat V^{i,n}(X_n)+\nu_n\widetilde V^{i,n+1}.
\label{eq:value-update}
\end{align}
Unvisited entries remain unchanged.

\medskip
\noindent\emph{Information structure:}
\label{rmk:information}
The joint state is publicly observed. Agent $i$ knows its own action set, dimension, 
interior radius, discount factor $\beta^i$, the time step $h$, the prescribed numerical sequences, 
and the perturbations. At iteration $n$, it uses only
$$
(X_n,\Delta_n^i,A_n^i,\fR_{n+1}^i,X_{n+1}),
$$
together with its own tables. It does not know the transition kernel or its derivatives, its own reward function beyond the observed samples, or the other agents' reward functions, realized rewards, actions, perturbations, or tables. Knowledge of the other agents' discount factors is unnecessary, although these design choices may also be specified publicly. There is no communication between agents.

\begin{Thm*}[Main result, informal]
Under the regularity, strong-monotonicity, and
small-time-step assumptions stated below, there exists a unique Markov
perfect equilibrium pair $(\alpha^\star,V^\star)$. Furthermore, under
the learning assumptions stated below, $(\widehat\alpha^n,\widehat V^n)$
almost surely converges to  $(\alpha^\star,V^\star)$.
\end{Thm*}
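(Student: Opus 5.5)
The argument has two parts: existence and uniqueness of the MPE via a contraction built on strong monotonicity, and almost-sure convergence via a coupled stochastic-approximation (Robbins--Siegmund) argument on the joint policy/value error.

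\textbf{Existence and uniqueness.} For a fixed continuation table $W:\cX\to\R^N$ and state $x$, consider the static game with payoffs $u\mapsto Q^i[W](x,u)$. Strong monotonicity of $-\nabla_a r$, with modulus $\kappa$, is preserved for small $h$ because the transition term $\beta^i\mathsf P[W^i]$ has Lipschitz action gradient. The condition I would aim to show is that the modulus of $-\Phi[W](x,\cdot)$ is at least $h\kappa - h\,C\|W\|$, which requires the derivatives of $\P$ to scale like $h$. Under this condition the static game has a unique Nash equilibrium $\fg[W](x)$. I would show $W\mapsto\fg[W]$ is Lipschitz with constant $O(1)$ by the standard variational-inequality perturbation bound: the gradient perturbation is of order $h\|W-W'\|$ and the monotonicity is of order $h$. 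Define $\cT[W](x):=Q[W](x,\fg[W](x))$. Then
\[
|\cT^i[W]-\cT^i[W']| \le \beta^i\|W-W'\| + O(h)\,\|W-W'\|,
\]
so for small $h$ the map $\cT$ is a contraction with factor $\bar\beta:=\max_i\beta^i+Ch<1$ on a bounded invariant ball. Its fixed point gives the unique solution of \eqref{eq:equilibrium-value}--\eqref{eq:equilibrium-VI}. A verification argument, using concavity of each $Q^i$ in $a^i$ and the Bellman principle for a single agent facing stationary opponents, identifies this fixed point as the unique stationary MPE.

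\textbf{Convergence.} Let $\cF_n$ be the history up to $X_n$. First I would compute the conditional mean of the estimator. Writing $\widehat Q^{i,n}:=Q^i[\widehat V^n]$ and assuming the perturbations are uniform on the unit sphere, Stein's identity gives
\[
\E[\widehat D^{i,n+1}\mid\cF_n]=\nabla_{a^i}\widehat Q^{i,n}_{\delta_n}\bigl(X_n,(1-\lambda_n)\widehat\alpha^n(X_n)\bigr),
\]
where the subscript $\delta_n$ denotes the smoothing over the ball. This differs from $\Phi[\widehat V^n](X_n,\widehat\alpha^n(X_n))$ by a bias $O(\delta_n+\lambda_n)$. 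The second moment is $O(\delta_n^{-2})$, since rewards and value tables are bounded; boundedness of the tables follows by induction from $\nu_n\le1$.

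The Lyapunov function is
\[
L_n:=\sum_x\Bigl(|\widehat\alpha^n(x)-\fg[\widehat V^n](x)|^2+c\,|\widehat V^n(x)-V^\star(x)|^2\Bigr),
\]
or alternatively one comparing $\widehat\alpha^n$ directly with $\alpha^\star$. For the policy part, nonexpansiveness of $\Proj$ and strong monotonicity give a decrease of $-2\eta_n h\kappa\,|\widehat\alpha^n(X_n)-\fg[\widehat V^n](X_n)|^2$ at the visited state. The error terms are $\eta_n(\delta_n+\lambda_n)$, the noise $\eta_n^2\delta_n^{-2}$, and the drift of the moving target $\fg[\widehat V^n]$, which is bounded by $O(\nu_n)$ through the Lipschitz property above. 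For the value part, the mean update is $\nu_n(\cT$-type$)$: conditionally on $\cF_n$, the sample $\widetilde V^{i,n+1}$ has mean $Q^i[\widehat V^n](X_n,\widehat\alpha^n(X_n))+O(\lambda_n+\delta_n)$. Its error against $V^\star$ is bounded by $\bar\beta\|\widehat V^n-V^\star\|$ plus $O(h)\,|\widehat\alpha^n-\fg[\widehat V^n]|$. Because the policy error enters the value recursion with an $O(h)$ coefficient while contracting at rate $h\kappa$, the constant $c$ and small $h$ can be chosen to make the coupled recursion contractive.

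The main obstacle is that only the visited state $X_n$ is updated, so the drift decreases only at one coordinate per step. I would handle this by requiring, as part of the learning assumptions, a uniform recurrence or minorization condition, for instance $\P(y\mid x,a)\ge p_0>0$ or an ergodicity bound uniform over all policies. With this, each state is visited with conditional frequency bounded below over windows of bounded length. The $\max$-norm contraction is then replaced by a weighted sum that decreases in expectation over such windows. The remainder is handled by the Robbins--Siegmund almost-supermartingale theorem, using $\sum\eta_n=\sum\nu_n=\infty$, $\sum\eta_n^2\delta_n^{-2}<\infty$, $\sum\eta_n(\delta_n+\lambda_n)<\infty$ (or a relative version), and $\nu_n/\eta_n$ bounded. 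This gives $L_n\to0$ almost surely, hence $\widehat V^n\to V^\star$, and continuity of $\fg$ then yields $\widehat\alpha^n\to\fg[V^\star]=\alpha^\star$. The window-averaging step, where the state distribution depends on the evolving policies, is where I expect the most technical work: the plan is to compare over a window with the Markov chain frozen at the start-of-window policy, and to bound the difference by the accumulated step sizes within the window.
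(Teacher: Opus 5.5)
Your existence and uniqueness argument is the paper's. The variational-inequality solution map is $L_\star/\mu_h$-Lipschitz in $W$, and $\cT_h$ contracts with factor $\kappa_h=\beta_\star+L_hL_\star/\mu_h$, which is your $\max_i\beta^i+Ch$.

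The convergence part has a genuine gap at the step you defer, and the gap is not the visitation issue. The only value bound you state is the max-norm one, $|\E[\widetilde V^{i,n+1}\mid\cF_n]-V^{i,\star}(X_n)|\le\bar\beta\|\widehat V^n-V^\star\|_\infty+\cdots$. From it, no weighted sum of squares $\sum_x c_x|v(x)|^2$ can be shown to decrease when only one coordinate moves per step. Over one step, or over a window, the value contribution is bounded only by $\sum_z c_z\bigl(-2|v(z)|^2+2\bar\beta|v(z)|\,\|v\|_\infty\bigr)$; with small steps the window drift is just the frequency-weighted sum of one-step drifts, and $c_z>0$ absorbs both the weights and the visit frequencies. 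Take $v=1$ at the state $y$ minimizing $c_y$ and $v=\bar\beta/2$ elsewhere. This bound is then at least $c_y\bigl(\tfrac12\bar\beta^2(|\cX|-1)-2(1-\bar\beta)\bigr)$, which is positive for $\bar\beta$ near one. So minorization and windows cure the visitation problem but not the mismatch between a max-norm contraction and a quadratic Lyapunov function. A repair needs structure your outline does not use. One option is the on-policy fact that $X_{n+1}$ is also the next updated state, giving a TD-type telescoping along the trajectory. The other is a Lyapunov function close to the max norm; the paper's rate proof does this with $\ell^p$ errors for an even $p$ so large that $q_\star(N+1)/p\le\zeta(1-q_\star)/4$.

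Second, your primary choice of the moving target $\fg[\widehat V^n]$ causes its own failure. The target moves at every state, not only at $X_n$, by $O(\nu_n)$, driven by the non-vanishing sampling noise in $\widetilde V^{n+1}$. Since $\fg$ is merely Lipschitz, the conditional mean of this increment need not be proportional to the errors, so the cross term is only $O(\nu_n)\|\widehat\alpha^n(x)-\fg[\widehat V^n](x)\|$. With $\nu_n/\eta_n$ merely bounded, Young's inequality leaves a remainder of order $\nu_n^2/\eta_n\asymp\nu_n$, which is not summable. The Robbins--Siegmund step therefore fails without a genuine timescale separation.

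The paper avoids both difficulties by using no global Lyapunov function. For each fixed $x$, it treats $\widehat\alpha^n(x)$ and $\widehat V^{i,n}(x)$ as projected stochastic approximations with random steps $\eta_n\one_{\{X_n=x\}}$ and $\nu_n\one_{\{X_n=x\}}$. These sums diverge a.s. because $\P(X_n=x\mid\cF_{n-1})\ge\zeta$. The fields are fixed and strongly monotone, with roots $\alpha^\star(x)$ and $V^{i,\star}(x)$, and the bias need only be asymptotically bounded. Proposition~\ref{prop:sa} then bounds the limsup errors through the deterministic constants $\cE_V,\cE_\alpha$, the essential suprema of the limsup errors. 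This gives $\cE_\alpha\le L_\star\cE_V/\mu_h$ and $\cE_V\le\beta_\star\cE_V+L_h\cE_\alpha\le\kappa_h\cE_V$, hence $\cE_V=0$. The argument works natively in the max norm and needs no windows or frozen-chain comparison. It uses only $\delta_n,\lambda_n\to0$ and the moment conditions. Your version instead adds $\sum_n\eta_n(\delta_n+\lambda_n)<\infty$, a bound on $\nu_n/\eta_n$, and spherical perturbations as extra hypotheses.
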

Existence and uniqueness are established in Theorem \ref{thm:mpe},
and almost-sure convergence in Theorem \ref{thm:convergence}.
Theorem \ref{thm:rate} also gives an $O(n^{-1/2})$
mean-squared error bound.

\paragraph{Organization of the remaining paper.} Section~\ref{sec:finite} states our assumptions and main results.
Section~\ref{sec:strategic-service} illustrates the framework with a
strategic queueing model. Section~\ref{sec:lemmas} collects preliminary
lemmas. Sections~\ref{sec:mpe}, \ref{secLconvergence},
and~\ref{sec:rate} prove existence and uniqueness of the MPE,
almost-sure convergence of the algorithm, and the mean-squared
convergence rate, respectively. Section~\ref{sec:episodic} explains how our algorithm is applied to episodic finite-horizon learning.

\paragraph{Notation.}  $B(x,\rho)$ denotes the closed ball centered at $x$ with
radius $\rho>0$. Throughout, we identify 
every agent-indexed family $f^1,\ldots,f^N$ of
functions with $f=(f^i)_{i=1}^N$.
Euclidean norms and inner products are denoted by $\|\cdot\|$ and $\ip{\cdot}{\cdot}$. 
For agent-indexed scalar functions $W=(W^1,\ldots,W^N)$ and 
feedback policies $\alpha=(\alpha^1,\ldots,\alpha^N)$, set
$$
\|W\|_\infty:=\sup_{x\in\cX}\max_{i\in[N]}|W^i(x)|,
\qquad
\|\alpha\|_{\infty,2}:=\sup_{x\in\cX}\|\alpha(x)\|.
$$

\section{Assumptions and results}
\label{sec:finite}
In this section, we first state our technical assumptions and then 
state all results.
\subsection{Assumptions on the game}
Define the \emph{reward pseudo-gradient} by
$$
F(x,a):=\bigl(\nabla_{a^i}r^i(x,a)\bigr)_{i=1}^N.
$$
Recall that $r^i$ is the expected unknown reward function of agent $i$;
see \eqref{eq:observed-reward} below.
We regard $\P(\cdot\mid x,a)$ and its action derivatives as vectors in $\R^{\cX}$.

\begin{Asm}[Rewards]
\label{asm:rewards}
For every $i\in[N]$ and $x\in\cX$, the map $a\mapsto r^i(x,a)$ 
is continuously differentiable, and there are constants 
$C_r,L_r,L_F<\infty$ and $\mu>0$ such that, uniformly over $i,x,a,b$,
\begin{align}
|r^i(x,a)|&\le C_r,
\qquad \|\nabla_ar^i(x,a)\|\le L_r,
\label{eq:reward-bounds}\\
\|F(x,a)-F(x,b)\|&\le L_F\|a-b\|,
\notag\\
\ip{F(x,a)-F(x,b)}{a-b}&\le-\mu\|a-b\|^2.
\label{eq:running-monotonicity}
\end{align}
Realized rewards also satisfy $\|\fR_n^i \| \le h C_r$ for all $n$ and $i$.
\end{Asm}

Note that \eqref{eq:running-monotonicity} is equivalent to $-F$ being strongly $\mu$-monotone.
Since the number of players is finite and each $\beta^i<1$,
$\beta_\star:=\max_i\beta^i<1$. Thus, for every admissible strategy
profile $\alpha$, 
$$
|J^i(x;\alpha)|\le M^i_\star:=\frac{hC_r}{1-\beta^i} \le
M_\star:=\max_i M^i_\star =\frac{hC_r}{1-\beta_\star} <\infty.
$$
In light of this observation, we introduce
$$
\cB:=\{W:\cX\to\R^N\text{ such that } |W^i(x)|\le M^i_\star\text{ for all }i,x\}.
$$
\begin{Asm}[Transition kernel]
\label{asm:transition}
For every $x\in\cX$, the probability vector map $a\mapsto \P(\cdot\mid x,a)$ 
is twice continuously differentiable, and there are $c_1,c_2<\infty$ 
such that, for all $i,j\in[N]$, $x\in\cX$, $a\in \cU$, $v^i\in\R^{d_i},\ v^j\in\R^{d_j}$,
\begin{align}
\label{eq:transition}
\|D_i\P(\cdot\mid x,a)[v^i]\|_1
\le hc_1\|v^i\|,
\qquad
\|D_{ji}^2\P(\cdot\mid x,a)[v^j,v^i]\|_1
\le h^2c_2\|v^j\|\|v^i\|.
\end{align}
\end{Asm}

Let $\mu$ be as in \eqref{eq:running-monotonicity} and set
\begin{equation}
\mu_h:=h (\mu- h \beta_\star N M_\star c_2),\qquad
L_h:=h L_r+M_\star L_\star, \qquad 
L_\star:=h\, \beta_\star \sqrt N \, c_1.
\label{eq:discounted-constants}
\end{equation}

\begin{Asm}[Time Step]
\label{asm:gap}
The time step $h>0$ is sufficiently small
so that 
\begin{equation}
\mu_h>0,
\qquad
\kappa_h:=\beta_\star+\frac{ L_h L_\star}{\mu_h} <1.
\label{eq:contraction}
\end{equation}
The contraction condition $\kappa_h<1$ is \emph{not} required for episodic learning.
\end{Asm}

If we treat all constants as independent of $h$, 
then $\mu_h= h (\mu -O(h))$ and $\kappa_h=\beta_\star+O(h)$ as $h\downarrow0$. 
Hence, the above condition 
requires the time step to be sufficiently small.
Equivalently, when the model is obtained by time-discretizing 
a continuous-time problem, the continuous-time discount rate and sampling frequency must be sufficiently large.
The latter allows the agents to react promptly to changes in the common state. 

\begin{Ex}[Affine jump kernel]
Suppose that $\cX$ is finite, and for $y\ne x$,
$$
q(y\mid x,a)=q^0(y\mid x)+\sum_{i=1}^N B^i(y\mid x)\cdot a^i\ge0
\qquad(a\in\cU),
$$
for some kernels $q^0, B^i$.
Set $\overline q:=\max_{x,a}\sum_{y\ne x}q(y\mid x,a)$ and $B:=\max_{i,x}\sum_{y\ne x}\|B^i(y\mid x)\|$. If $h\overline q\le1$, define
$$
\P(y\mid x,a)=
\begin{cases}
hq(y\mid x,a),&y\ne x,\\
1-h\displaystyle\sum_{z\ne x}q(z\mid x,a),&y=x.
\end{cases}
$$
The affine action dependence gives Assumption~\ref{asm:transition}
with $c_1=2B$ and $c_2=0$, and hence $\mu_h=h\mu>0$.
The discounted contraction condition $\kappa_h<1$ is still a separate requirement.
\end{Ex}

\subsection{Assumptions on the algorithm}

For the analysis, define a sequence of sigma-algebras by
$$
\cF_0:=\sigma(X_0),\qquad
\cG_n:=\cF_n\vee\sigma(\Delta_n),\qquad
\cF_{n+1}:=\cG_n\vee\sigma(\fR_{n+1},X_{n+1}),\qquad
\qquad n\ge0,
$$
where $\Delta_n=(\Delta_n^i)_{i=1}^N$, $A_n=(A_n^i)_{i=1}^N$, and 
$\fR_{n+1}=(\fR_{n+1}^i)_{i=1}^N$ denote the joint vectors. 
The current state and the tables $\widehat\alpha^n,\widehat V^n$ are $\cF_n$-measurable 
while the joint action $A_n$ is $\cG_n$-measurable by \eqref{eq:played-action}. 
The value samples, gradient estimates, and updated tables are $\cF_{n+1}$-measurable.

Choose $\varrho_\star>0$ such that
\begin{equation}
\label{eqn:rhostar}
B(0,\varrho_\star)\subset \cU^i
\qquad\text{for every }i\in[N].
\end{equation}
Recall that Assumption \ref{asm:rewards} requires that 
all rewards are bounded: $|\fR_n^i|\le h  C_r$.

\begin{Asm}[Observations and numerical sequences]
\label{asm:observations}
Almost surely, for every step $n\ge0$,
state $x\in\cX$, and agent $i\in[N]$,
\begin{equation}
\P(X_{n+1}=x\mid\cG_n)=\P(x\mid X_n,A_n),
\qquad
\E[\fR_{n+1}^i\mid\cG_n]=hr^i(X_n,A_n).
\label{eq:observed-reward}
\end{equation}
Conditional on $\cF_n$, the perturbations are independent across agents and satisfy
\begin{equation}
\E[\Delta_n^i\mid\cF_n]=0,
\quad
\E[\Delta_n^i(\Delta_n^i)^\top\mid\cF_n]=\frac1{d_i}I_{d_i},
\quad
\|\Delta_n^i\|\le1.
\label{eq:direction-moments}
\end{equation}
All scalar sequences appearing in the algorithm are deterministic and satisfy
\begin{gather}
\eta_n>0,\qquad 0<\nu_n, \lambda_n  \le 1,\qquad 0<\delta_n\le\varrho_\star\lambda_n,
\label{eq:basic-sequences}\\
\delta_n\to0,\qquad\lambda_n\to0,\qquad
\sum_n\eta_n=\infty,\qquad
\sum_n\frac{\eta_n^2}{\delta_n^2}<\infty,
\label{eq:policy-sequences}\\
\sum_n\nu_n=\infty,\qquad\sum_n\nu_n^2<\infty.
\label{eq:value-sequences}
\end{gather}
\end{Asm}

The above assumption guarantees
that actions are admissible.
Indeed, for $\|\Delta^i\|\le1$, \eqref{eq:basic-sequences}
implies that $\|(\delta_n/\lambda_n)\Delta^i\|\le \varrho_\star$.
Hence by \eqref{eqn:rhostar}, $(\delta_n/\lambda_n)\Delta^i \in \cU^i$.
Then by the convexity of $\cU^i$,
for any $u^i\in \cU^i$, 
\begin{equation*}
(1-\lambda_n)u^i+\delta_n\Delta^i
=(1-\lambda_n)u^i+\lambda_n \left((\delta_n/\lambda_n) \Delta^i\right)\in \cU^i.
\end{equation*}
Also, note that the reward realization and the successor state need not be independent. 

\begin{Asm}[Exploration]
\label{asm:accessibility}
The state space is finite, and there is $\zeta>0$ such that
\begin{equation*}
\P(y\mid x,a)\ge\zeta,
\qquad x,y\in\cX,\quad a\in\cU.
\end{equation*}
\end{Asm}

\begin{Rmk}\label{rmk:acessibility-weak}
Assumption \ref{asm:accessibility} is only one of many possible sufficient assumptions. For example, for a  feedback profile
$\alpha:\cX\to\cU$, define the transition matrix $P_\alpha(x,y):=\P(y\mid x,\alpha(x))$, $x,y\in\cX$. 
Without proof, we note that one may replace Assumption \ref{asm:accessibility} by the requirement that there exist an integer $m\ge1$ and a constant $\zeta>0$
such that
$$
\bigl(P_{\alpha_1}P_{\alpha_2}\cdots
P_{\alpha_{m}}\bigr)(x,y)\ge\zeta
$$
for every $x,y\in\cX$ and every sequence of feedback
profiles $\alpha_1,\ldots,\alpha_{m}:\cX\to\cU$.
\end{Rmk}

\subsection{Main results}

In this section, we state the main results.  Their proofs are given 
in the subsequent sections.

\begin{Thm}[Existence and uniqueness of MPE]
\label{thm:mpe}
Under Assumptions~\ref{asm:rewards}, \ref{asm:transition}, and~\ref{asm:gap}, 
there exists a unique stationary MPE $(\alpha^\star,V^\star)$. 
It is the unique solution of \eqref{eq:equilibrium-value}--\eqref{eq:equilibrium-VI}, and $V^\star\in\cB$, i.e.,
$$
|V^{i,\star}(x)|\le M^i_\star=\frac{hC_r}{1-\beta^i}
\qquad\forall\, x \in \cX, i\in[N].
$$
\end{Thm}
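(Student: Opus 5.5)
The plan is to prove existence and uniqueness through a fixed point on value functions. For each $W\in\cB$ I would build a unique static-game equilibrium selector $\alpha[W]$, and then show that the induced map $T[W](x):=Q[W](x,\alpha[W](x))$ is a $\kappa_h$-contraction on $\cB$ in $\|\cdot\|_\infty$.

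\emph{Step 1 (static stage game).} Fix $W\in\cB$ and $x\in\cX$. The pseudo-gradient is
$$
\Phi[W](x,a)=hF(x,a)+\bigl(\beta^i\,D_i\P(\cdot\mid x,a)\cdot W^i\bigr)_{i}.
$$
Writing $a-b$ blockwise and using the second-derivative bound in \eqref{eq:transition} together with $\|W^i\|_\infty\le M_\star$, the transition part is $L$-Lipschitz with a constant such that
$$
\ip{\Phi[W](x,a)-\Phi[W](x,b)}{a-b}\le -h\mu\|a-b\|^2+h^2\beta_\star M_\star c_2\sum_{i,j}\|a^i-b^i\|\|a^j-b^j\|\le-\mu_h\|a-b\|^2 .
$$
Here I use $\sum_{i,j}\|v^i\|\|v^j\|\le N\|v\|^2$. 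Thus $-\Phi[W](x,\cdot)$ is strongly $\mu_h$-monotone and continuous on the compact convex set $\cU$. The variational inequality therefore has a unique solution $\alpha[W](x)$ (existence by Hartman--Stampacchia, uniqueness by strong monotonicity).

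Each $Q^i[W](x,(\cdot,a^{-i}))$ is strongly concave in $u^i$, since its gradient is strongly monotone in its own block. The VI solution is therefore exactly a Nash equilibrium of the stage game, as the paper remarks.

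\emph{Step 2 (Lipschitz dependence on $W$).} For $W,\tilde W\in\cB$ with $a=\alpha[W](x)$ and $b=\alpha[\tilde W](x)$, add the two VIs (test with $b$ and with $a$) and use strong monotonicity:
$$
\mu_h\|a-b\|^2\le\ip{\Phi[W](x,b)-\Phi[\tilde W](x,b)}{a-b}.
$$
By the first-derivative bound in \eqref{eq:transition}, $\|\Phi[W](x,b)-\Phi[\tilde W](x,b)\|\le h\beta_\star\sqrt N c_1\|W-\tilde W\|_\infty=L_\star\|W-\tilde W\|_\infty$. Hence $\|a-b\|\le (L_\star/\mu_h)\|W-\tilde W\|_\infty$.

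Next,
$$
|T^i[W](x)-T^i[\tilde W](x)|\le|Q^i[W](x,a)-Q^i[W](x,b)|+|Q^i[W](x,b)-Q^i[\tilde W](x,b)|.
$$
The second term is at most $\beta_\star\|W-\tilde W\|_\infty$. For the first term, $Q^i[W]$ is Lipschitz in $a$ with constant $hL_r+M_\star L_\star=L_h$. This uses $\|\nabla_a r^i\|\le L_r$ and $\|D_a\P\cdot W^i\|\le M_\star h c_1\sqrt N$ after summing over blocks; the factor $\beta_\star$ in $L_\star$ is harmless, or I would absorb it by bounding $\beta^i\le\beta_\star$. Combining the two terms gives Lipschitz constant $\beta_\star+L_hL_\star/\mu_h=\kappa_h<1$.

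$T$ maps $\cB$ into itself because $|Q^i[W]|\le hC_r+\beta^iM^i_\star=M^i_\star$. Banach's theorem then gives a unique fixed point $V^\star\in\cB$, and with $\alpha^\star:=\alpha[V^\star]$ the pair solves \eqref{eq:equilibrium-value}--\eqref{eq:equilibrium-VI}.

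\emph{Step 3 (equivalence with MPE).} Since $V^{i,\star}$ satisfies the Bellman equation under $\alpha^\star$, and this equation has a unique bounded solution by discounting, we get $V^{i,\star}=J^i(\cdot;\alpha^\star)$. Against any deviation $\pi^i$ (also any history-dependent one), $\alpha^{i,\star}(x)$ maximizes $Q^i[V^\star](x,(u^i,\alpha^{\star,-i}(x)))$. The single-agent MDP optimality principle then gives $V^{i,\star}\ge J^i(\cdot;(\pi^i,\alpha^{\star,-i}))$.

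Conversely, let $(\alpha,V)$ be any stationary MPE. Each $V^i$ is the optimal value of agent $i$'s MDP, so it solves the Bellman optimality equation. Then $\alpha^i(x)$ maximizes $Q^i[V](x,(\cdot,\alpha^{-i}(x)))$, which gives the VI. Bounded rewards put $V$ in $\cB$, so $V$ is a fixed point of $T$ and uniqueness follows.

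The main obstacle is the careful block bookkeeping in Step 1 and Step 2, so that the constants come out exactly as $\mu_h$, $L_\star$, $L_h$. That means handling the cross terms $D^2_{ji}$ in the monotonicity estimate with the factor $N$, and getting the $\sqrt N$ in the bound on $\Phi[W]-\Phi[\tilde W]$.
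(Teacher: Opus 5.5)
Your proposal is correct and follows the paper's proof essentially step by step: a unique stage-game selector from the strongly $\mu_h$-monotone variational inequality, the Lipschitz bound $L_\star/\mu_h$ obtained by testing the two inequalities against each other, the $\kappa_h$-contraction of $W\mapsto Q[W](\cdot,\alpha[W](\cdot))$ on $\cB$, and Banach's theorem. The differences are only in presentation: you invoke Hartman--Stampacchia where the paper applies Brouwer to $a\mapsto\Proj_\cU(a+\Phi[W](x,a))$, and you spell out the verification and the converse (MPE $\Rightarrow$ Nash system $\Rightarrow$ fixed point of the contraction), which the paper leaves as a standard verification argument.
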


\begin{Rmk}
Uniqueness also holds in the broader class of time-dependent Markov policies. Indeed, let
$(\alpha_t,V_t)_{t\ge0}$ be such an equilibrium, with $V_t$ denoting
its continuation value discounted relative to time $t\ge0$. The proof of Theorem~\ref{thm:mpe} yields a strict contraction $\cT_h:\cB\to\cB$ so that the values satisfy $V_t=\cT_hV_{t+1}$. Repeated application of this map yields, for every $t\ge0$ and $m\ge1$,
$$
\|V_t-V^\star\|_\infty
\le\kappa_h^m\|V_{t+m}-V^\star\|_\infty
\le2M_\star\kappa_h^m.
$$
Letting $m\to\infty$ with $t$ fixed yields $V_t=V^\star$ and hence $\alpha_t=\alpha^\star$. 
\end{Rmk}

We initialize the algorithm deterministically
satisfying
$\widehat\alpha^{i,0}(x)\in \cU^i$,
$|\widehat V^{i,0}(x)|\le\overline M$
with some constant $\overline M\ge M_\star= hC_r/(1-\beta_\star)$.

\begin{Thm}[Almost-sure convergence]
\label{thm:convergence}
Under Assumptions~\ref{asm:rewards}, \ref{asm:transition}, \ref{asm:gap}, \ref{asm:observations}, \ref{asm:accessibility}, 
and the above initialization, the algorithm \eqref{eq:played-action}--\eqref{eq:value-update} satisfies
\begin{equation}
\|\widehat\alpha^n-\alpha^\star\|_{\infty,2}
+\|\widehat V^n-V^\star\|_\infty\longrightarrow0
\qquad\text{almost surely}.
\label{eq:main-convergence}
\end{equation}
Moreover, $\|A_n-\alpha^\star(X_n)\|\to0$ almost surely.
\end{Thm}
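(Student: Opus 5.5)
We prove Theorem~\ref{thm:convergence} by a coupled stochastic-approximation argument. We treat the per-state errors
$$
a_n(x):=\|\widehat\alpha^n(x)-\alpha^\star(x)\|^2,\qquad e_n:=\|\widehat V^n-V^\star\|_\infty,
$$
separately, and close the argument with the contraction $\kappa_h<1$ from Assumption~\ref{asm:gap}.

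\emph{Preliminary bounds.} By induction, $|\widehat V^{i,n}(x)|\le\overline M$ for all $n$. Indeed, $|\widetilde V^{i,n+1}|\le hC_r+\beta^i\overline M\le\overline M$, since $\overline M\ge M_\star$, and $\widehat V^{i,n+1}(X_n)$ is a convex combination. Hence $\|\widehat D^{i,n+1}\|\le d_i\overline M'/\delta_n$ for some constant $\overline M'$. Next, we compute the conditional mean of the gradient estimate. Conditioning on $\cG_n$ and using \eqref{eq:observed-reward}, we get $\E[\widetilde V^{i,n+1}\mid\cG_n]=Q^i[\widehat V^n](X_n,A_n)$. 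We then Taylor-expand in $\Delta_n^i$ around $\bar a_n:=(1-\lambda_n)\widehat\alpha^n(X_n)$ and use \eqref{eq:direction-moments} together with the independence across agents. The zeroth-order term vanishes, and the cross terms in $\Delta_n^j$, $j\neq i$, vanish by independence and zero mean. The first-order term gives $\nabla_{a^i}Q^i[\widehat V^n](X_n,\bar a_n)$. Since $Q^i$ is $C^{1,1}$ in $a$ by Assumptions~\ref{asm:rewards} and \ref{asm:transition}, the remainder is $O(\delta_n)$. Thus
$$
\E[\widehat D^{n+1}\mid\cF_n]=\Phi[\widehat V^n](X_n,\widehat\alpha^n(X_n))+b_n,\qquad \|b_n\|\le C(\delta_n+\lambda_n)\to0,
$$
and the martingale noise has conditional second moment $O(\delta_n^{-2})$.

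\emph{Per-state policy recursion.} The key deterministic input is the monotonicity estimate from the proof of Theorem~\ref{thm:mpe}. For $W\in\cB$ (or bounded by $\overline M$), the map $-\Phi[W](x,\cdot)$ is $\mu_h$-strongly monotone, by \eqref{eq:running-monotonicity} and the $c_2$ bound. Moreover, $\|\Phi[W]-\Phi[V^\star]\|\le L_\star\|W-V^\star\|_\infty$ by the $c_1$ bound. Using nonexpansiveness of $\Proj$ and the variational inequality \eqref{eq:equilibrium-VI}, we obtain on the event $\{X_n=x\}$:
$$
\E[a_{n+1}(x)\mid\cF_n]\le a_n(x)-2\eta_n\mu_h a_n(x)+2\eta_nL_\star e_n\sqrt{a_n(x)}+C\eta_n(\delta_n+\lambda_n)+C\eta_n^2/\delta_n^2.
$$
Exploration (Assumption~\ref{asm:accessibility}) guarantees that every state is visited with conditional probability at least $\zeta$ at each step. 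Hence the effective gain per state, $\sum_n\eta_n\one\{X_n=x\}$, diverges almost surely (by a conditional Borel--Cantelli argument), and similarly for $\nu_n$. A Robbins--Siegmund type argument then gives
$$
\limsup_n\|\widehat\alpha^n-\alpha^\star\|_{\infty,2}\le\frac{L_\star}{\mu_h}\limsup_n e_n\quad\text{a.s.}
$$

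\emph{Value recursion.} We write $\widetilde V^{i,n+1}=Q^i[\widehat V^n](X_n,A_n)+\text{noise}$. Here $\|A_n-\widehat\alpha^n(X_n)\|\le C(\lambda_n+\delta_n)\to0$, and $|Q^i[W](x,a)-Q^i[W](x,b)|\le L_h\|a-b\|$. Therefore the value update is an asynchronous stochastic approximation of the map
$$
W\mapsto Q[W](\cdot,\widehat\alpha^n(\cdot)),
$$
whose deviation from $V^\star$ is at most $\beta_\star e_n+L_h\|\widehat\alpha^n-\alpha^\star\|_{\infty,2}$. We use the standard asynchronous argument of Tsitsiklis \cite{T94}: the martingale parts vanish since $\sum\nu_n^2<\infty$, and visits are recurrent. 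This yields
$$
\limsup e_n\le\beta_\star\limsup e_n+L_h\limsup\|\widehat\alpha^n-\alpha^\star\|_{\infty,2}.
$$
Combining with the previous display gives $\limsup e_n\le\kappa_h\limsup e_n$. Since $\kappa_h<1$ and $e_n$ is bounded, it follows that $e_n\to0$, and then $\widehat\alpha^n\to\alpha^\star$. The claim $A_n-\alpha^\star(X_n)\to0$ follows from $\|A_n-\widehat\alpha^n(X_n)\|\le C(\lambda_n+\delta_n)$.

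\emph{Main obstacle.} The delicate step is making the two limsup inequalities rigorous despite the asynchronous, trajectory-dependent updates. The difficulty is that the policy and value errors are coupled, so neither recursion can be closed in isolation. I would handle this with an iterative-bounds argument: suppose $\limsup e_n\le\varepsilon_k$ a.s. on a tail; deduce $\limsup\|\widehat\alpha^n-\alpha^\star\|_{\infty,2}\le L_\star\varepsilon_k/\mu_h+o(1)$; then obtain $\varepsilon_{k+1}=\kappa_h\varepsilon_k$. This is the standard box-shrinking technique for asynchronous contractions. The noise terms are controlled via almost-sure convergence of the martingales $\sum\eta_n\xi_n$ and $\sum\nu_n\xi_n'$, which is where $\sum\eta_n^2/\delta_n^2<\infty$ is used.
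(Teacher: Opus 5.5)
Your proposal is correct and is essentially the paper's proof: uniform bounds, divergence of the weighted visit sums, the one-point bias/noise estimates, a per-state policy bound with gain $L_\star/\mu_h$, a per-entry value bound with total gain $\beta_\star+L_hL_\star/\mu_h=\kappa_h$, and closure via $\kappa_h<1$. One caveat: Lemma~\ref{lem:bellman} gives $\mu_h$-monotonicity only for $\|W\|_\infty\le M_\star$, not $\overline M$, so it must be applied to $\Phi[V^\star]$ (as your displayed recursion implicitly does). The step you call a ``Robbins--Siegmund type argument'' cannot be plain Robbins--Siegmund, because $2\eta_nL_\star e_n\sqrt{a_n(x)}$ and $C\eta_n(\delta_n+\lambda_n)$ need not be summable. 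The paper's Proposition~\ref{prop:sa} handles this by applying Robbins--Siegmund to the squared distance from an enlarged ball of radius $(b+2\epsilon)/\mu$ around the target, localized by stopping times. The paper also replaces your box-shrinking iteration by the deterministic essential-supremum constants $\cE_V,\cE_\alpha$, which yield $\cE_V\le\kappa_h\cE_V$ in one step.
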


To state our convergence rate, we define
$$
\mathsf A^n:=\sum_{x\in\cX}\E\bigl[\|\widehat\alpha^n(x)-\alpha^\star(x)\|^2\bigr],
\qquad
\mathsf V^n:=\sum_{x\in\cX}
\E\bigl[\|\widehat V^{n}(x)-V^{\star}(x)\|^2\bigr],
$$
and make the dependencies of the numerical sequences on $n$ explicit:
for all sufficiently large $n$, 
\begin{equation}
\eta_n=\frac{{\omega}}{\mu_h n},
\qquad \nu_n=\frac{{\omega}}{n},
\qquad \delta_n=\frac{\delta_\circ}{n^{1/4}},
\qquad \lambda_n=\frac{\lambda_\circ}{n^{1/4}},
\label{eq:rate-schedules}
\end{equation}
for some positive constants
$\omega,\delta_\circ,\lambda_\circ$
satisfying $\delta_\circ\le\varrho_\star\lambda_\circ$,
where $\mu_h$ is as in \eqref{eq:discounted-constants}  and $\varrho_\star$ as in \eqref{eqn:rhostar}. 

\begin{Thm}[Convergence rate]
\label{thm:rate}
Suppose that all hypotheses of Theorem~\ref{thm:convergence} 
and the numerical schedules~\eqref{eq:rate-schedules} hold. There exists
$\omega_\star=\omega_\star(\mu_h,C_r,L_h,L_\star,\beta_\star,N,\zeta)>0$
such that for every $\omega > \omega_\star$, there is $K<\infty$ such that
\begin{equation}
\mathsf A^n+\mathsf V^n\le\frac K{\sqrt{n}},\qquad n\ge1.
\label{eq:mean-square-rate}
\end{equation}
In particular,
\begin{equation}
\E\big[\|\widehat\alpha^n-\alpha^\star\|_{\infty,2}^2
+\|\widehat V^n-V^\star\|_\infty^2\big]
\le\frac K{\sqrt{n}},
\qquad
\E\bigl[\|A_n-\alpha^\star(X_n)\|^2\bigr]\le\frac K{\sqrt{n}}.
\label{eq:max-rate}
\end{equation}
\end{Thm}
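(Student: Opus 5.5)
We derive coupled one-step recursions for the per-state conditional errors and close them with a Chung-type lemma. For each $x\in\cX$ write $a_n(x):=\E\|\widehat\alpha^n(x)-\alpha^\star(x)\|^2$ and $v_n(x):=\E\|\widehat V^n(x)-V^\star(x)\|^2$, so that $\mathsf A^n=\sum_x a_n(x)$ and $\mathsf V^n=\sum_x v_n(x)$.

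\emph{Policy recursion.} Conditioning on $\cF_n$ and using $\one\{X_n=x\}$, the projection is nonexpansive and $\alpha^\star(x)=\Proj_{\cU}(\alpha^\star(x)+\eta_n\Phi[V^\star](x,\alpha^\star(x)))$ by \eqref{eq:equilibrium-VI}. Expanding the square gives
$$
\|\widehat\alpha^{n+1}(x)-\alpha^\star(x)\|^2\le \|\widehat\alpha^n(x)-\alpha^\star(x)\|^2+2\eta_n\one_{\{X_n=x\}}\ip{\widehat D^{n+1}-\Phi[V^\star](x,\alpha^\star(x))}{\widehat\alpha^n(x)-\alpha^\star(x)}+\eta_n^2\|\widehat D^{n+1}\|^2 .
$$
The SPSA lemmas of Section~\ref{sec:lemmas} should give three facts.
\begin{enumerate}[(i)]
\item $\E[\widehat D^{n+1}\mid\cF_n]=\Phi[\widehat V^n](X_n,(1-\lambda_n)\widehat\alpha^n(X_n))+O(\delta_n)$. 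This uses the second-order Taylor expansion with the bounds in Assumptions~\ref{asm:rewards} and \ref{asm:transition}, together with \eqref{eq:direction-moments}.
\item $\E\|\widehat D^{n+1}\|^2\le C/\delta_n^2$.
\item $\Phi[W]$ is $\mu_h$-strongly monotone in $a$ and $L_\star$-Lipschitz in $W$ in the sense $\|\Phi[W]-\Phi[W']\|\le L_\star\|W-W'\|_\infty$.
\end{enumerate}
Comparing $\Phi[\widehat V^n]$ with $\Phi[V^\star]$ via (iii), shifting by $\lambda_n$ at a cost $O(\lambda_n)$, and applying Young's inequality yields
$$
a_{n+1}(x)\le \Bigl(1-\eta_n p_n(x)\bigl(2\mu_h-\epsilon\bigr)\Bigr)a_n(x)+\eta_n\frac{L_\star^2}{\epsilon}\,\E\|\widehat V^n-V^\star\|_\infty^2+C\eta_n(\lambda_n+\delta_n)\sqrt{a_n(x)}+C\frac{\eta_n^2}{\delta_n^2}.
$$
Here $p_n(x)$ is the visit probability, which is at least $\zeta$ for $n\ge1$ by Assumption~\ref{asm:accessibility}. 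A subtlety is that $X_n$ is correlated with the tables; I would handle this through the lower bound $\P(X_n=x\mid\cF_{n-1})\ge\zeta$, absorbing the resulting mismatch into the constants, as in the a.s.\ convergence proof.

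\emph{Value recursion.} Similarly,
$$
\widehat V^{n+1}(x)-V^\star(x)=(1-\nu_n\one)(\widehat V^n-V^\star)(x)+\nu_n\one\,(\widetilde V^{n+1}-V^\star(x)),
$$
where $\E[\widetilde V^{n+1}\mid\cF_n]=Q[\widehat V^n](x,A)$ up to $O(\delta_n)$ terms. Comparing with $Q[V^\star](x,\alpha^\star(x))$ gives the bound
$$
\beta_\star\|\widehat V^n-V^\star\|_\infty+L_h\|\widehat\alpha^n(x)-\alpha^\star(x)\|+O(\lambda_n+\delta_n).
$$
Since each $v_n(x)$ enters the sup-norm $\|\widehat V^n-V^\star\|_\infty$, I bound $\|\cdot\|_\infty^2\le\sum_x|\cdot|^2$, which costs a factor $|\cX|$. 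This gives
$$
\mathsf V^{n+1}\le(1-\nu_n\zeta(1-\beta_\star-\epsilon))\mathsf V^n+C\nu_n\mathsf A^n+C\nu_n(\lambda_n+\delta_n)^2+C\nu_n^2 .
$$

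\emph{Closing the system.} Add the two recursions with a weight $\theta$, forming $\Lambda^n=\mathsf A^n+\theta\mathsf V^n$. Choose $\theta$ and $\epsilon$ so that each cross term is dominated by the opposite diagonal term; this is exactly where a condition like $\kappa_h<1$ from Assumption~\ref{asm:gap} enters, as in the contraction of Theorem~\ref{thm:mpe}. The result is
$$
\Lambda^{n+1}\le\Bigl(1-\frac{c\,\omega}{n}\Bigr)\Lambda^n+\frac{C}{n}\bigl(n^{-1/4}\sqrt{\Lambda^n}\bigr)+\frac{C}{n^{3/2}},
$$
with $c=c(\mu_h,L_h,L_\star,\beta_\star,\zeta)$. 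The middle term is handled by Young's inequality: it is at most $\frac{\epsilon'}{n}\Lambda^n+\frac{C}{n^{3/2}}$, using $(\lambda_n+\delta_n)^2\sim n^{-1/2}$ and $\eta_n^2/\delta_n^2\sim n^{-3/2}$. Chung's lemma then gives $\Lambda^n\le K n^{-1/2}$ provided $c\,\omega-\epsilon'>1/2$, which defines $\omega_\star$. Finally, \eqref{eq:max-rate} follows since sup-norms are dominated by sums over the finite set $\cX$. For the played action, $\|A_n-\alpha^\star(X_n)\|^2\le 3\bigl(\|\widehat\alpha^n(X_n)-\alpha^\star(X_n)\|^2+\lambda_n^2C+\delta_n^2\bigr)$.

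The main obstacle is the coupled contraction step. The value error enters the policy recursion through $L_\star$ with rate $\eta_n$, while the policy error enters the value recursion through $L_h$ with rate $\nu_n$. Both rates scale as $\omega/n$ with a $1/\mu_h$ factor, so the weights must be tuned so that the product condition reduces to $\kappa_h<1$. I would first try $\theta$ proportional to $\mu_h/(L_hL_\star)$ and mimic the Lipschitz estimate behind the MPE contraction $\cT_h$.
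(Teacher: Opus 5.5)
\textbf{There is a genuine gap: the coupled mean-square system does not close.} It already fails in your value recursion. Dissipation happens only at the visited state. After the visitation argument it is $\nu_n\E\sum_i|v_n^i(X_n)|^2$, which in the worst case is only of order $\nu_n\zeta\,\mathsf V^n$. The coupling you use, $2\nu_n\beta_\star|v_n^i(X_n)|\,\|\widehat V^n-V^\star\|_\infty$, involves the global sup-norm. After Young's inequality and $\|\cdot\|_\infty^2\le\sum_x|\cdot|^2$, it contributes a term of order $\nu_n\beta_\star\mathsf V^n$ (times $N$) with no factor $\zeta$ in front. Since $\zeta\le1/|\cX|$, the drift of $\mathsf V^n$ is negative only under a condition of the type $\beta_\star^2\lesssim\zeta$ (up to factors of $N$), which is not assumed. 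Your coefficient $\zeta(1-\beta_\star-\epsilon)$ appears only because the loss you mention, a factor of order $1/\zeta\ge|\cX|$ relative to the dissipation, has silently disappeared from the display.

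The same mismatch affects the term $\eta_n(L_\star^2/\epsilon)\E\|\widehat V^n-V^\star\|_\infty^2$ in the policy recursion. Dominating it by $\theta\nu_n\zeta\mathsf V^n$ while keeping the $L_h$ cross term below the policy dissipation requires roughly $(L_hL_\star/\mu_h)^2\lesssim\zeta$, whatever $\theta$ is. The weight $\theta$ only rebalances off-diagonal terms and cannot touch the $\beta_\star$ self-coupling. The condition $\kappa_h<1$ is a sup-norm contraction. It does not transfer to a mean-square Lyapunov function when each table entry is updated only with probability about $\zeta$.

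\textbf{The missing idea is to use high moments.} The paper fixes an even $p\ge4$ and sets $S_n=\sum_x\bigl(\|e_n(x)\|^p+\sum_i|v_n^i(x)|^p\bigr)$ with $e_n=(\widehat\alpha^n-\alpha^\star)/s_\star$. Here $s_\star$ is chosen so that $q_\star=\max\{L_\star/(\mu_hs_\star),\,\beta_\star+L_hs_\star\}<1$. This rescaling is where $\kappa_h<1$ enters, and it plays the role of your $\theta$. Each row's coupling is then at most $q_\star S_n^{1/p}$. Young's inequality $r^{p-1}S_n^{1/p}\le\frac{p-1}{p}r^p+\frac1pS_n$ turns the global coupling into $\omega q_\star(N+1)S_n/n$, which does not grow with $p$. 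The local dissipation is $p\omega(1-q_\star)s_n(X_n)/n$, with $\E[s_n(X_n)]\ge\frac\zeta2\E[S_n]-Kn^{-3p/4}$; this makes your $\cF_{n-1}$-conditioning remark precise.

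The recursion then closes as follows.
\begin{itemize}
\item Choose $p$ with $q_\star(N+1)/p\le\zeta(1-q_\star)/4$.
\item Take $\omega>1/(\zeta(1-q_\star))$, so that the resulting contraction rate exceeds $p/4$.
\item Lemma~\ref{lem:polynomial-comparison} gives $\E[S_n]\le Kn^{-p/4}$.
\item Jensen's inequality returns $\mathsf A^n+\mathsf V^n\le Kn^{-1/2}$.
\end{itemize}
Your handling of the bias, the $\delta_n^{-2}$ noise, the $n^{-1/4}\sqrt{\Lambda^n}$ term, and the played action carries over unchanged.
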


\subsection{Episodic learning}
\label{ss:episodic}
In this section, we explain how our algorithm naturally carries over to finite-horizon problems.  
Consider a model with a terminal time horizon $T$, with decision periods $t=0,\ldots,T-1$. 
Expected rewards $r_t^i$, transitions $\P_t$, and feedbacks may depend on $t$
and there is an expected terminal reward $g^i:\cX \to \R$. 
For $x\in\cX$ and feedback controls $\alpha^i_t:\cX\to\cU^i$, $t<T$, 
consider the expected payoff
\begin{equation*}
J_t^i(x;\alpha):=\E_{t,x}^\alpha\Big[
\sum_{s=t}^{T-1}hr_s^i(X_s,\alpha_s(X_s))
+g^i(X_T)\Big].
\end{equation*}
Define the Bellman operator and its pseudo-gradient  by
$$
Q_t^i[W](x,a):=hr_t^i(x,a)+\mathsf P_t[W^i](x,a),
\qquad
\Phi_t[W](x,a):=\bigl(\nabla_{a^i}Q_t^i[W](x,a)\bigr)_{i=1}^N.
$$
The Nash system reads for $i\in[N]$, $t<T$, and $x\in\cX$,
\begin{equation}
\begin{split}
V_T^{i,\star}(x)&=g^i(x),\\
\ip{u-\alpha_t^\star(x)}{\Phi_t[V_{t+1}^\star](x,\alpha_t^\star(x))}&\le0,
\qquad \forall u\in\cU,\\
V_t^{i,\star}(x)&=Q_t^i[V_{t+1}^\star](x,\alpha_t^\star(x)).
\end{split}
\label{eq:episodic-nash-system}
\end{equation}

\paragraph{Episodic algorithm.}
Episodes are indexed by $n\ge0$. The environment produces a reset state $X_0^n$, 
and the policy and value tables with superscript $n$ are held fixed during episode $n$. 
The observations in that episode produce tables with superscript $n+1$. Agent $i$ plays
$$
A_t^{i,n}:=(1-\lambda_n)\widehat\alpha_t^{i,n}(X^n_t)+\delta_n\Delta_t^{i,n}.
$$
It receives rewards $\fR^{i,n}_{t+1}$ (resp.~$\fg^{i,n}$ at terminal time), 
observes a new state $X^n_{t+1}$, and forms the estimates
\begin{equation*}
\widetilde V_t^{i,n+1}:=
\begin{cases}
\fR_{t+1}^{i,n}+\widehat V_{t+1}^{i,n}(X_{t+1}^n),&t<T,\\
\fg^{i,n},&t=T,
\end{cases}
\qquad
\widehat D_t^{i,n+1}:=\frac{d_i}{\delta_n} \Delta_t^{i,n}\widetilde V_t^{i,n+1}
\quad(t<T).
\end{equation*}
After one entire play of the game, it updates the entries visited in that episode by
\begin{align*}
\widehat\alpha_t^{i,n+1}(X_t^n)
&=\Proj_{\cU^i}\bigl(\widehat\alpha_t^{i,n}(X_t^n)+\eta_n\widehat D_t^{i,n+1}\bigr),
\qquad t<T,
\\
\widehat V_t^{i,n+1}(X_t^n)
&=(1-\nu_n)\widehat V_t^{i,n}(X_t^n)+\nu_n\widetilde V_t^{i,n+1},
\qquad t\le T.
\end{align*}
Unvisited entries remain unchanged. This is the same one-point ascent and value averaging as before, with the next-time table replacing the stationary continuation table.

We prove an analogous convergence result in Section~\ref{sec:episodic} below.

\section{Illustrative example}
\label{sec:strategic-service}

In this section, we consider a discounted dynamic service game inspired by the
strategic queueing model of Zhong, Gopalakrishnan, and Ward
\cite{ZhongGopalakrishnanWard2025}. Here, the agents are service providers
who choose their processing rates. For example, they may operate computing
servers that process jobs arriving at a shared queue. Providers receive
contractual payments for the processing capacity they supply, pay operating
costs, and receive standby payments when idle. All rewards are net monetary
payments observed by the providers. Each provider's processing rate affects the shared queue and thereby the assignment of future jobs. These interactions give rise to a
non-cooperative dynamic game.

\vspace{-10pt}
\paragraph{State and actions.}
There are $N\ge2$ service providers (agents), each operating one server, and a finite total capacity of jobs $k\ge N$, including
jobs in service. With this convention, at most $k-N$ jobs can wait. A job is
assigned uniformly among idle servers, waits if all servers are busy and space remains, and otherwise leaves. Waiting jobs enter service in order of arrival. Decision periods are indexed by $t=0,1,\ldots$, with time step $h>0$.
The public state is $X_t=(\ell_t,S_t)$, where $\ell_t$ counts waiting
jobs and $S_t\subseteq[N]$ records the busy servers. Formally, we let the state space be 
$$
\cX=(\{0\}\times2^{[N]})
\cup(\{1,\ldots,k-N\}\times\{[N]\}),
$$
where $2^{[N]}$ denotes the set of all subsets of $[N]$.
When not idle, after observing $X_t$, agent $i$ chooses its
service rate $\alpha_t^i\in[\underline\alpha,\overline\alpha]$, where
$0<\underline\alpha<\overline\alpha<\infty$. 

\vspace{-10pt}
\paragraph{Transitions.}
Let $\lambda>0$ be the arrival rate of jobs, and
require $h(\lambda+N\overline \alpha)<1$. At each update, for a given joint action  
$a=(a^1,\ldots,a^N)$ and $S \subset [N]$, the following transitions can happen:
$$
\begin{array}{c|c|c}
\text{transition} & \text{probability} & \text{conditions} \\ \hline
(0,S)\to(0,S\cup\{i\}) & h\lambda/(N-|S|) & |S|<N, \ i\notin S \\
(0,S)\to(0,S\setminus\{i\}) & h a^i & i\in S \\
(\ell,[N])\to(\ell+1,[N]) & h\lambda & 0\le\ell<k-N \\
(\ell,[N])\to(\ell-1,[N]) & h\sum_{i=1}^Na^i
 & 1\le\ell\le k-N
\end{array}
$$
The first row models an arrival assigned uniformly at random among
idle agents. The second row records a service completion when the
queue is empty, after which the agent becomes idle. The third row
models an arrival joining the queue when all agents are busy and
space remains. In the last row, a service completion occurs and the
first waiting job immediately replaces the departing job
at the same agent; the queue length decreases by one while all agents
remain busy. The remaining probability is assigned to no change. An arrival to a full system is rejected and leaves the state unchanged. We illustrate the transitions 
for a problem with $N=2$ and  $k=4$
in Figure \ref{fig:service-transitions} below.

\begin{figure}[h]
\centering
\resizebox{0.7\linewidth}{!}{
\begin{tikzpicture}[
  font=\small,
  >={Stealth[length=2.1mm,width=1.5mm]},
  qstate/.style={circle,draw=black,fill=blue!3,
    line width=0.8pt,minimum size=17mm,inner sep=1pt,
    outer sep=1pt,text=black},
  arrival/.style={->,draw=blue!75!black,text=blue!75!black,
    line width=0.9pt},
  completion/.style={->,draw=red!80!black,text=red!80!black,
    line width=0.9pt}
]
% State: (number of waiting jobs, set of busy servers).
\node[qstate] (empty) at (0,0)       {$(0,\varnothing)$};
\node[qstate] (one)   at (2.65,1.85) {$(0,\{1\})$};
\node[qstate] (two)   at (2.65,-1.85){$(0,\{2\})$};
\node[qstate] (busy)  at (5.30,0)    {$(0,\{1,2\})$};
\node[qstate] (wait1) at (8.60,0)    {$(1,\{1,2\})$};
\node[qstate] (wait2) at (11.90,0)   {$(2,\{1,2\})$};

% Empty system <-> only server 1 busy.
\draw[arrival] (empty.45) --
  node[midway,above=4pt] {$h\lambda/2$} (one.205);
\draw[completion] (one.225) --
  node[midway,below=4pt] {$ha^1$} (empty.25);

% Empty system <-> only server 2 busy.
\draw[arrival] (empty.315) --
  node[midway,below=4pt] {$h\lambda/2$} (two.155);
\draw[completion] (two.135) --
  node[midway,above=4pt] {$ha^2$} (empty.335);

% Only server 1 busy <-> both busy: server 2 completes on return.
\draw[arrival] (one.335) --
  node[midway,above=4pt] {$h\lambda$} (busy.135);
\draw[completion] (busy.155) --
  node[midway,below=4pt] {$ha^2$} (one.315);

% Only server 2 busy <-> both busy: server 1 completes on return.
\draw[arrival] (two.25) --
  node[midway,below=4pt] {$h\lambda$} (busy.225);
\draw[completion] (busy.205) --
  node[midway,above=4pt] {$ha^1$} (two.45);

% Both servers stay busy when a waiting job enters service.
\draw[arrival] (busy.12) --
  node[midway,above=4pt] {$h\lambda$} (wait1.168);
\draw[completion] (wait1.192) --
  node[midway,below=4pt] {$h(a^1+a^2)$} (busy.348);
\draw[arrival] (wait1.12) --
  node[midway,above=4pt] {$h\lambda$} (wait2.168);
\draw[completion] (wait2.192) --
  node[midway,below=4pt] {$h(a^1+a^2)$} (wait1.348);
\end{tikzpicture}
}
\caption{State transitions for two servers ($N=2$) and total capacity
$k=4$. Each node $(\ell,S)$ records the number of waiting jobs and
the set of busy servers. Blue arrows represent arrivals and red arrows
service completions. Self-transitions, including rejected
arrivals at full capacity, are omitted.}
\label{fig:service-transitions}
\end{figure}

\paragraph{Rewards.}
We use a capacity-based payment contract. 
When busy, provider $i\in[N]$
receives a payment at rate $p^i a^i$, where $p^i\ge0$, and  incurs 
a quadratic operating cost with parameter $c_E>0$, representing
electricity, rented computing capacity or other expenses. When idle,
the provider receives a standby payment at rate $v^i\ge0$ for remaining
available. Payments are based on the capacity supplied during the decision
period, rather than on the number of service completions. The net payment is therefore
\begin{equation}
r^i(x,a) =p^i a^i-c_E(a^i)^2, \ \text{when}\  i\in S,
\qquad\text{and}\qquad
r^i(x,a) =v^i , \ \text{when}\  i\notin S,
\label{eq:service-reward}
\end{equation}
where $x=(\ell,S)$, $a^i\in[\underline{\alpha},\overline{\alpha}]$ for $i\in S$, and $a^i=0$ for $i\notin S$.
At each learning iteration, provider $i$ observes its net monetary reward
$\fR_{n+1}^i:=h r^i(X_n,A_n).$
The provider need not know the payment or operating-cost
functions; the learning algorithm uses only the observed net payment.
Agent $i$ seeks to maximize
$$
\E_x^\alpha\Big[\sum_{t=0}^{\infty}(\beta^i)^t
 h r^i(X_t,\alpha(X_t))\Big],
\qquad \beta^i\in(0,1).
$$

To apply our learning algorithm, set
$\alpha_{\mathrm c}:=(\underline\alpha+\overline\alpha)/2$ and $
\varrho:=(\overline\alpha-\underline\alpha)/2$. We apply our algorithm of Section~\ref{ss:algo} to the centered actions
$u^i:=a^i-\alpha_{\mathrm c}\in\cU^i:=[-\varrho,\varrho]$ when $i\in S$, and set $\cU^i:=\{0\}$ when $i\notin S$,
with $d_i=1$ and $\varrho_\star=\varrho$.
Equivalently, writing $\widehat\alpha^{i,n}$ for the
service-rate table, replace \eqref{eq:played-action} by
$$
A_n^i
:=(1-\lambda_n)\widehat\alpha^{i,n}(X_n)
+\lambda_n\alpha_{\mathrm c}+\delta_n\Delta_n^i
\in[\underline\alpha,\overline\alpha],
$$
when it is busy, and $A_n^i=0$ otherwise. Perturbations and policy
updates are performed only for busy providers; value updates are performed
for every provider. All other steps remain unchanged.

We assume that $h>0$ is small enough so that
\begin{equation}
h(\lambda+N\overline\alpha)<1\qquad \text{and}\qquad 
\kappa_h:=\beta_\star+
\frac{\beta_\star\sqrt N\,h}{c_E}
\Big(L_r+\frac{2\beta_\star\sqrt N\,hC_r}{1-\beta_\star}\Big)<1,
\label{eq:service-contraction}
\end{equation}
with
\begin{equation}
C_r:=\max_i(p^i\overline\alpha+v^i)+c_E\overline\alpha^2,
\qquad
L_r:=\max_i p^i+2c_E\overline\alpha.
\label{eq:sercive-small-time}
\end{equation}

\begin{Prop}[Equilibrium and learning in the service model]
\label{prop:service}
Assume \eqref{eq:sercive-small-time} and \eqref{eq:service-contraction} hold. 
Then, the service game has a unique stationary MPE $(\alpha^\star,V^\star)$, with
$|V^{i,\star}(x)|\le hC_r/(1-\beta^i)$.
Furthermore, under Assumption \ref{asm:observations} 
and  the initialization specified before Theorem~\ref{thm:convergence}, the service-rate and value tables of the modified algorithm described above converge almost surely to the unique MPE as in
\eqref{eq:main-convergence}.
\end{Prop}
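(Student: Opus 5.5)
The plan is to verify the hypotheses of Theorem~\ref{thm:mpe} and Theorem~\ref{thm:convergence} for the service game, after the centering $u^i=a^i-\alpha_{\mathrm c}$ and with the state-dependent action sets $\cU^i(x)=[-\varrho,\varrho]$ for $i\in S$ and $\{0\}$ otherwise. The paper notes that its analysis extends directly to state-dependent action sets. Coordinates with $\cU^i(x)=\{0\}$ are trivial: they carry no gradient, no perturbation and no policy update. So everything reduces to the busy agents at each state, with $d_i=1$ and $\varrho_\star=\varrho$. The modified played action $(1-\lambda_n)\widehat\alpha^{i,n}+\lambda_n\alpha_{\mathrm c}+\delta_n\Delta^i_n$ is exactly \eqref{eq:played-action} in centered coordinates, so the algorithm is literally the one of Section~\ref{ss:algo}.

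\textbf{Assumption~\ref{asm:rewards}.} On $[\underline\alpha,\overline\alpha]$ we have $|r^i|\le p^i\overline\alpha+c_E\overline\alpha^2$ when $i$ is busy and $|r^i|=v^i$ otherwise, so $|r^i|\le C_r$. Likewise $|\partial_{a^i}r^i|=|p^i-2c_Ea^i|\le L_r$, and $r^i$ does not depend on $a^{-i}$. The pseudo-gradient $F$ is therefore diagonal with entries $-2c_E$ on busy coordinates. This gives $L_F=2c_E$ and strong monotonicity with $\mu=2c_E$. Realized rewards are $hr^i$, so they are bounded by $hC_r$.

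\textbf{Assumption~\ref{asm:transition}.} The kernel is affine in $a$: the probabilities of the completion transitions are $ha^i$ or $h\sum_i a^i$, and the diagonal entry absorbs the rest. Hence $c_2=0$, and $\|D_i\P[v^i]\|_1\le 2h|v^i|$, so $c_1=2$. The condition $h(\lambda+N\overline\alpha)<1$ makes all entries valid probabilities. Then $\mu_h=2hc_E$, $L_\star=2h\beta_\star\sqrt N$, and $L_h=hL_r+\tfrac{hC_r}{1-\beta_\star}\,2h\beta_\star\sqrt N$. Substituting,
\[
\kappa_h=\beta_\star+\frac{L_hL_\star}{\mu_h}
=\beta_\star+\frac{\beta_\star\sqrt N\,h}{c_E}\Big(L_r+\frac{2\beta_\star\sqrt N\,hC_r}{1-\beta_\star}\Big),
\]
which is exactly \eqref{eq:service-contraction}. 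So Assumption~\ref{asm:gap} holds, and Theorem~\ref{thm:mpe} gives the unique MPE together with the value bound.

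\textbf{Assumption~\ref{asm:accessibility} (main obstacle).} This assumption fails: most transitions have probability zero, for example from $(0,\varnothing)$ one cannot jump to a full queue in one step. I would instead use the weaker condition of Remark~\ref{rmk:acessibility-weak}, which requires a uniform lower bound for $m$-step products.

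\begin{itemize}
\item Every state has a self-loop probability of at least $1-h(\lambda+N\overline\alpha)>0$.
\item Arrivals have probability at least $h\lambda/N$.
\item Completions have probability at least $h\underline\alpha$, because busy servers use rates at least $\underline\alpha$.
\end{itemize}

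Hence any two states are connected by a path of length at most $m:=2(k+N)$, padded with self-loops, for instance empty the system and then fill it appropriately. Along such a path every factor is bounded below uniformly over all feedback profiles, which gives $\zeta=\min\{1-h(\lambda+N\overline\alpha),\,h\lambda/N,\,h\underline\alpha\}^m$. The point needing care is that the lower bounds must hold for the perturbed actions $A_n$ and not only for the policies. They do, since $A_n\in[\underline\alpha,\overline\alpha]$.

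Since Remark~\ref{rmk:acessibility-weak} was stated without proof, I would also sketch why it suffices. The convergence proof uses accessibility only to guarantee that every state is visited infinitely often, with frequency bounded below. Over blocks of $m$ steps, the conditional probability of visiting $y$ is at least $\zeta$, whatever the adapted actions are. A conditional Borel--Cantelli argument then gives the same visit-frequency bounds as in the one-step case. With this in place, Assumption~\ref{asm:observations} is assumed, and Theorem~\ref{thm:convergence} yields \eqref{eq:main-convergence}.
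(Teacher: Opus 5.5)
Your proposal is correct and follows essentially the same route as the paper. You verify Assumptions~\ref{asm:rewards}--\ref{asm:gap} with $\mu=L_F=2c_E$, $c_1=2$, $c_2=0$, $\mu_h=2hc_E$, which recovers \eqref{eq:service-contraction}, and you replace Assumption~\ref{asm:accessibility} by the multi-step condition of Remark~\ref{rmk:acessibility-weak}, using empty-then-refill paths padded with self-loops whose probabilities are bounded below uniformly over $[\underline\alpha,\overline\alpha]$. The differences are minor: the paper takes $m=2k$ (at most $k$ jobs, hence at most $k$ completions and $k$ arrivals), your $m=2(k+N)$ is looser but equally valid, and your sketch of why the Remark suffices goes beyond the paper; for the weighted sums $\sum_n\eta_n\one_{\{X_n=x\}}$ it amounts to running the Step~2 martingale argument along residue classes mod $m$.
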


\begin{proof} As the action spaces naturally depend on the state,
this model  fits our setting with minor modifications. 
 For each agent, the centered action space
is $[-\varrho,\varrho]$ when it is busy and $\{0\}$ when it is idle.
We omitted state-dependent action spaces from the general formulation
to simplify the exposition. Here, action perturbations and policy
updates are performed only for busy agents; idle agents use the
prescribed zero service rate, while value updates are performed
at every state. With this modification, the arguments establishing
equilibrium and convergence apply as follows.

Each busy provider has centered action set $[-\varrho,\varrho]$,
which contains zero in its interior. Idle providers have no choice as
their actions are fixed at zero. The reward bounds hold with the
displayed $C_r,L_r$. For feasible physical service rates $a,b$
at state $x$, we have
$$
F_i(x,a)=p^i\one_{\{i\in S\}}-2c_Ea^i,
\qquad
\ip{F(x,a)-F(x,b)}{a-b}=-2c_E\|a-b\|^2.
$$
Idle coordinates satisfy $a^i=b^i=0$, so this calculation gives the
required bounds on the active coordinates. 
Thus $\mu=L_F=2c_E$, and it is straightforward to 
see that Assumption \ref{asm:transition} holds with
$c_1=2$, $c_2=0$, $\mu_h=2hc_E$, and $\kappa_h$ as in \eqref{eq:service-contraction}.

To verify the exploration assumption, put
$$
\epsilon:=\min\{h\lambda/N,\ h\underline\alpha,
\ 1-h(\lambda+N\overline\alpha)\}>0.
$$
From any state, at most $k$ service completions lead to the empty state;
from there, at most $k$ prescribed arrivals and assignments lead to any
target state. Every state has a self-transition of probability at least $\epsilon$, 
uniformly over admissible actions, including the
perturbed actions used by the algorithm. 
Thus, any such path of length $r\le2k$ can be extended to exactly
$2k$ steps by appending $2k-r$ self-transitions at its terminal state. 
The condition in Remark~\ref{rmk:acessibility-weak} therefore holds with
$m=2k$ and $\zeta=\epsilon^{2k}$, for every sequence of feedback profiles.
The claim follows from Remark~\ref{rmk:acessibility-weak} and the
arguments of Theorems~\ref{thm:mpe} and \ref{thm:convergence}, applied
with the state-dependent action spaces described above.
\end{proof}

\section{Preliminaries}
\label{sec:lemmas}

We first collect a few preliminary lemmas.
Let $\mu_h, L_h,L_\star$ be as in \eqref{eq:discounted-constants},
$L_r,L_F$ be as in Assumption~\ref{asm:rewards},
and $c_1,c_2$ be as in Assumption~\ref{asm:transition}.

\begin{Lem}[One-step games]
\label{lem:bellman}
Suppose that Assumptions~\ref{asm:rewards} and~\ref{asm:transition} hold
and  $W,\widetilde W$ are bounded tables with $\|W\|_\infty \le \fm$.  Then
\begin{align}
\|\Phi[W](x,a)-\Phi[W](x,b)\| &\le L_\Phi(\fm)\|a-b\|,
\label{eq:Phi-action-lip}\\
\|\Phi[W](x,a)-\Phi[\widetilde W](x,a)\| &\le L_\star\|W-\widetilde W\|_\infty,
\label{eq:Phi-W-lip}\\
|Q^i[W](x,a)-Q^i[W](x,b)| &\le L_Q(\fm)\|a-b\|,
\label{eq:Q-action-lip}
\end{align}
where 
$$
L_\Phi(\fm):=hL_F+h^2\, \beta_\star N c_2 \fm,
\qquad
L_Q(\fm):=h\bigl(L_r+\beta_\star\sqrt N c_1\fm \bigr).
$$
In particular, $L_Q(M_\star)=L_h$, and if  $\|W\|_\infty\le M_\star$, then
\begin{equation}
\ip{\Phi[W](x,a)-\Phi[W](x,b)}{a-b}
\le-\mu_h\|a-b\|^2.
\label{eq:bellman-monotonicity}
\end{equation}
\end{Lem}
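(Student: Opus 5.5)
The plan is to write out $\Phi[W]$ componentwise and bound the reward part and the transition part separately. By definition,
$$
\Phi^i[W](x,a)=h\nabla_{a^i}r^i(x,a)+\beta^i\sum_{y\in\cX}W^i(y)\,\nabla_{a^i}\P(y\mid x,a),
$$
so $\Phi[W](x,a)=hF(x,a)+G[W](x,a)$, where $G^i[W](x,a):=\beta^i D_i\mathsf P[W^i](x,a)$. Each estimate then splits into a reward contribution, controlled by Assumption~\ref{asm:rewards}, and a transition contribution, controlled by Assumption~\ref{asm:transition}. The transition terms are handled by duality: for a vector $v^i$,
$$
\ip{G^i[W](x,a)}{v^i}=\beta^i\sum_y W^i(y)\,D_i\P(y\mid x,a)[v^i],
$$
whose absolute value is at most $\beta_\star\|W\|_\infty\|D_i\P(\cdot\mid x,a)[v^i]\|_1\le \beta_\star\|W\|_\infty hc_1\|v^i\|$.

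\textbf{Lipschitz bounds.} For \eqref{eq:Phi-W-lip}, $G$ is linear in $W$, so the duality bound gives $\|G^i[W]-G^i[\widetilde W]\|\le h\beta_\star c_1\|W-\widetilde W\|_\infty$. Summing the squares over $N$ agents gives the factor $\sqrt N$, hence the constant $L_\star$.

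For \eqref{eq:Q-action-lip}, write $Q^i[W](x,a)-Q^i[W](x,b)$ as the integral of the full gradient along the segment from $b$ to $a$. The reward part is bounded by $hL_r\|a-b\|$. For the transition part, split $a-b=(v^1,\dots,v^N)$ and bound $\sum_j|D_j\mathsf P[W^i][v^j]|\le h c_1\fm\sum_j\|v^j\|\le hc_1\fm\sqrt N\|a-b\|$ by Cauchy--Schwarz. With the factor $\beta_\star$ this gives $L_Q(\fm)$.

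For \eqref{eq:Phi-action-lip}, the $F$ part contributes $hL_F$. For $G$, apply the mean value theorem along the segment together with the second-derivative bound: for unit $w^i$,
$$
\ip{G^i[W](a)-G^i[W](b)}{w^i}=\beta^i\int_0^1\sum_j\sum_y W^i(y)\,D^2_{ji}\P(y)[v^j,w^i]\,ds,
$$
which is bounded by $\beta_\star\fm h^2c_2\sum_j\|v^j\|$. Summing over $i$ is the one step where the combinatorics matter: $\bigl(\sum_i(\sum_j\|v^j\|)^2\bigr)^{1/2}\le\sqrt N\cdot\sqrt N\|a-b\|=N\|a-b\|$. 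This gives the constant $h^2\beta_\star Nc_2\fm$.

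\textbf{Identification and monotonicity.} The identity $L_Q(M_\star)=L_h$ is immediate from the definitions: $L_Q(M_\star)=hL_r+M_\star h\beta_\star\sqrt Nc_1=hL_r+M_\star L_\star$.

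For \eqref{eq:bellman-monotonicity}, write
$$
\ip{\Phi[W](a)-\Phi[W](b)}{a-b}=h\ip{F(a)-F(b)}{a-b}+\ip{G[W](a)-G[W](b)}{a-b}.
$$
The first term is at most $-h\mu\|a-b\|^2$ by \eqref{eq:running-monotonicity}. The second term is at most $\|G[W](a)-G[W](b)\|\,\|a-b\|\le h^2\beta_\star Nc_2M_\star\|a-b\|^2$ by the bound just derived with $\fm=M_\star$. Adding the two gives $-h(\mu-h\beta_\star NM_\star c_2)\|a-b\|^2=-\mu_h\|a-b\|^2$.

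The main care point is the constant in the second-derivative estimate, where the factor $N$ comes from the two Cauchy--Schwarz steps. A sharper bilinear estimate could give a smaller constant, but the stated $N$ suffices, and the segment stays in $\cU$ by convexity.
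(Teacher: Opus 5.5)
Your proposal is correct and follows the paper's proof essentially step for step. It uses the same split into reward and transition parts, the same duality bounds from Assumption~\ref{asm:transition}, and the same $\sqrt N\cdot\sqrt N$ counting for the second-derivative term; the only cosmetic difference is in \eqref{eq:bellman-monotonicity}, where you bound the continuation term by Cauchy--Schwarz and \eqref{eq:Phi-action-lip} with $\fm=M_\star$, while the paper bounds $\sum_{i,j}\|e^i\|\|e^j\|\le N\|e\|^2$ directly, and both give the same constant.
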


A straightforward proof is given in Appendix~\ref{app:bellman}.
Note that \eqref{eq:bellman-monotonicity} states that the negative
pseudo-gradient is strongly monotone, a property inherited
from the strong monotonicity assumption \eqref{eq:running-monotonicity} on $-F$.

Next, we prove a gradient estimate that will be used to control
the one-point estimate of agent $i$'s component of the pseudo-gradient
in \eqref{eq:bellman-sample}.  Below $\sF$ is a sigma-field, 
$W: \Omega \times \cX\to\R^N$ is a bounded $\sF$-measurable function, 
and $\lambda,\delta$ are uniformly positive $\sF$-measurable satisfying  $0<\lambda\le1$ and 
$0<\delta\le\varrho_\star\lambda$ a.s. Conditional on $\sF$, perturbations 
$\Delta=(\Delta^1,\ldots,\Delta^N)$ are independent and 
satisfy the moment conditions in \eqref{eq:direction-moments}. 
For some $\sF$-measurable $x \in \cX$, $a \in \cU$,  there is a  bounded random vector  
$\widetilde V = (\widetilde V^1,\ldots, \widetilde V^N)$  such that, for $i \in [N]$,
$$
\E[\widetilde V^i\mid\sG]=Q^i[W](x,(1-\lambda)a+\delta\Delta),
\qquad\text{where}\qquad
\sG:=\sF\vee\sigma(\Delta^1,\ldots,\Delta^N).
$$
Define the one-point estimate $\widehat D^i=(d_i/\delta) \widetilde V^i \Delta^i$,
$B:=\E[\widehat D\mid\sF]-\Phi[W](x,a)$, and
$\xi:=\widehat D-\E[\widehat D\mid\sF]$.
It is clear that $B$ is $\sF$-measurable, $\E[\xi\mid\sF]=0$, and 
$\widehat D=\Phi[W](x,a)+B+\xi$.
\begin{Lem}[Gradient estimate]
\label{lem:gradient}
Let $\sF, W, \widetilde V, \widehat D, B, \xi, \lambda, \delta$ be as above.
Then there is a deterministic constant $K$, depending on the fixed model constants, 
action sets, dimensions, and the uniform bounds of $W$ and $\widetilde V$, such that
\begin{equation}
\|B\|\le K(\delta+\lambda),
\qquad
\E[\|\xi\|^2\mid\sF]\le\frac{K}{\delta^2}.
\label{eq:one-point-bounds}
\end{equation}
\end{Lem}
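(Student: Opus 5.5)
The plan is to condition first on $\sG$ and then on $\sF$. By the tower property and the $\sG$-measurability of $\Delta$,
$$
\E[\widehat D^i\mid\sF]=\frac{d_i}{\delta}\E\bigl[Q^i[W](x,(1-\lambda)a+\delta\Delta)\,\Delta^i\bigm|\sF\bigr].
$$
Write $g(u):=Q^i[W](x,u)$, a $C^1$ function of the joint action with gradient bounded by $L_Q(\fm)$ from Lemma~\ref{lem:bellman}. Here $\fm$ is the uniform bound of $W$, and $\nabla_{a^i}g=\Phi^i[W](x,\cdot)$ is $L_\Phi(\fm)$-Lipschitz by \eqref{eq:Phi-action-lip}.

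The bias $B$ is then controlled by a first-order expansion around the unperturbed point $b:=(1-\lambda)a$. Set $c:=b+\delta\Delta$. By the mean value theorem in integral form,
$$
g(c)=g(b)+\delta\ip{\nabla g(b)}{\Delta}+R,\qquad |R|\le \tfrac12 \sup\mathrm{Lip}(\nabla g)\,\delta^2\|\Delta\|^2.
$$
Only $\nabla_{a^i} g$ is known to be Lipschitz, not the full gradient. I would therefore avoid a second-order remainder in the other coordinates and instead write $g(c)-g(b)=\int_0^1\ip{\nabla g(b+s\delta\Delta)}{\delta\Delta}\,ds$.

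Multiplying by $(d_i/\delta)\Delta^i$ and taking conditional expectations, the constant term vanishes because $\E[\Delta^i\mid\sF]=0$. For the linear term, the $j=i$ component gives $\nabla_{a^i}g(b)$, since $d_i\E[\Delta^i(\Delta^i)^\top\mid\sF]=I$. The cross terms $j\ne i$ vanish by conditional independence and mean zero.

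It remains to handle the difference between $\nabla g(b+s\delta\Delta)$ and $\nabla g(b)$. The $i$-block of this difference is bounded by $L_\Phi\,\delta\|\Delta\|$. For the $j$-blocks with $j\ne i$, the difficulty is that $\nabla_{a^j}Q^i$ is only continuous. This is where I would use Assumption~\ref{asm:transition}: $\nabla_{a^j}Q^i=h\nabla_{a^j}r^i+\beta^i D_j\mathsf P[W^i]$. The transition part is Lipschitz with constant $h^2c_2\fm$ thanks to the second-derivative bound. The reward cross-derivative $\nabla_{a^j} r^i$, however, is only assumed continuous and bounded by $L_r$.

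To get around this, I would use independence: the cross term is $\E[\Delta^i\,\phi(\Delta^{-i},\Delta^i)]$. Integrating out $\Delta^i$ conditionally on $\Delta^{-i}$, the cross contribution equals $\frac{d_i}{\delta}\E[\Delta^i (g(b+\delta\Delta)-g(b+\delta(0,\Delta^{-i})))]$ minus the terms already counted. Concretely, I would split
$$
g(c)-g(b)=[g(c)-g(c^{-i})]+[g(c^{-i})-g(b)],
$$
where $c^{-i}$ replaces $\Delta^i$ by $0$. The second bracket is independent of $\Delta^i$ given $\sF$, so it contributes zero. The first bracket equals $\delta\int_0^1\ip{\nabla_{a^i}g(c^{-i}+s\delta\Delta^i)}{\Delta^i}ds$, which involves only the Lipschitz block $\Phi^i$. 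It therefore yields $\Phi^i[W](x,b)$ up to an error $L_\Phi\delta$, using also $\|c^{-i}-b\|\le\delta$. This decomposition is the key step and removes the obstacle.

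Finally, $\|\Phi[W](x,b)-\Phi[W](x,a)\|\le L_\Phi\lambda\|a\|\le L_\Phi\,\mathrm{diam}\,\lambda$ by compactness of $\cU$. Summing over $i$ gives $\|B\|\le K(\delta+\lambda)$.

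For the variance bound, $\E[\|\xi\|^2\mid\sF]\le\E[\|\widehat D\|^2\mid\sF]\le\sum_i d_i^2\delta^{-2}\,\|\widetilde V^i\|_\infty^2$, since $\|\Delta^i\|\le1$. This gives $K/\delta^2$ directly.
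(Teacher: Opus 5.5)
Your proposal is correct, and once you settle on the split $g(c)-g(b)=[g(c)-g(c^{-i})]+[g(c^{-i})-g(b)]$ it is essentially the paper's argument: expand only in agent $i$'s own action around $((1-\lambda)a^i,A^{-i})$, drop the zeroth-order term via conditional independence and $\E[\Delta^i\mid\sF]=0$, use $d_i\E[\Delta^i(\Delta^i)^\top\mid\sF]=I_{d_i}$ on the linear term, and conclude with the Lipschitz bound for $\Phi[W](x,\cdot)$ together with the trivial second-moment bound for $\xi$. One small slip: the bound $\|\Delta^j\|\le1$ holds for each agent separately, so $\|c^{-i}-b\|=\delta\|\Delta^{-i}\|\le\sqrt{N-1}\,\delta$ rather than $\le\delta$, which only changes the constant $K$.
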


The short proof is given in Appendix~\ref{app:gradient}. 
Next, we recall a classical result due to Robbins and Siegmund.

\begin{Lem}[Robbins and Siegmund \cite{RobbinsSiegmund1971}]
\label{lem:robbins-siegmund}
Let $(\sF_n)_{n\ge0}$ be a filtration, and let
$(Z_n,v_n,w_n)_{n\ge0}$ be non-negative $(\sF_n)$-adapted processes such
that each $Z_n$ is integrable and
$$ 
\E[Z_{n+1}\mid\sF_n]\le Z_n-v_n+w_n
\qquad\text{almost surely.}
$$ 
If $\sum_nw_n<\infty$ a.s., then almost surely $Z_n$ converges to
a finite random variable and $\sum_nv_n<\infty$.
\end{Lem}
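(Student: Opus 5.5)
The plan is the classical localization argument: build a supermartingale out of $Z_n$ and the partial sums of $v_n$ and $w_n$, stop it before the accumulated $w$-mass becomes large so that it is bounded below, and apply Doob's supermartingale convergence theorem on each stopped piece.

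\emph{Step 1 (compensated process).} Define
$$
S_n:=Z_n+\sum_{k<n}v_k-\sum_{k<n}w_k .
$$
Since $v_k,w_k$ are $\sF_k$-measurable for $k<n$, the hypothesis $\E[Z_{n+1}\mid\sF_n]\le Z_n-v_n+w_n$ gives $\E[S_{n+1}\mid\sF_n]\le S_n$, provided the terms are integrable. Neither integrability nor a lower bound is available a priori, since $w_n$ need not be integrable and $\sum w_k$ is only finite almost surely.

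\emph{Step 2 (localization).} For $m\in\mathbb N$ let
$$
\tau_m:=\inf\Big\{n\ge0:\ \sum_{k\le n}w_k>m\Big\}.
$$
Because $w$ is adapted, $\sum_{k\le n}w_k$ is $\sF_n$-measurable, so $\tau_m$ is a stopping time. On $\{k<\tau_m\}$ we have $w_k\le m$. The hypothesis also gives $v_k\le Z_k+w_k$, because $\E[Z_{k+1}\mid\sF_k]\ge0$, so on $\{k<\tau_m\}$ we get $v_k\le Z_k+m$, which is integrable. Hence the stopped process $S^{(m)}_n:=S_{n\wedge\tau_m}$ only involves $Z_{n\wedge\tau_m}$ and the sums $\sum_{k<n\wedge\tau_m}v_k$ and $\sum_{k<n\wedge\tau_m}w_k$, and all of these are integrable. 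The previous display then gives $\E[S^{(m)}_{n+1}\mid\sF_n]\le S^{(m)}_n$, which one checks separately on $\{\tau_m\le n\}$ and $\{\tau_m>n\}$, the latter being $\sF_n$-measurable. Moreover $\sum_{k<n\wedge\tau_m}w_k\le m$, so $S^{(m)}_n\ge-m$. Thus $S^{(m)}+m$ is a nonnegative supermartingale and converges almost surely to a finite limit by Doob's theorem.

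\emph{Step 3 (conclusion).} Since $\sum_n w_n<\infty$ a.s., we have $\Omega=\bigcup_m\{\tau_m=\infty\}$ up to a null set. On $\{\tau_m=\infty\}$ the processes $S^{(m)}$ and $S$ coincide, so $S_n$ converges to a finite limit a.s. Adding back the convergent series $\sum_{k<n}w_k$ shows that $Z_n+\sum_{k<n}v_k$ converges a.s. to a finite limit. Both summands are nonnegative and the second is nondecreasing, so $\sum_k v_k$ is bounded by this limit and hence finite. Then $Z_n$, being a difference of two convergent sequences, converges to a finite random variable.

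The only delicate point is Step 2. One must ensure that the stopped process is genuinely integrable and that the supermartingale inequality survives stopping without assuming $\E\sum w_k<\infty$. The bounds $w_k\le m$ and $v_k\le Z_k+w_k$ on $\{k<\tau_m\}$ are what make this work.
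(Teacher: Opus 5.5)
Your proof is correct and complete. The paper gives no proof of its own and simply cites \cite{RobbinsSiegmund1971}. Your compensate-and-localize argument is essentially the standard Robbins--Siegmund proof, specialized to the case without the multiplicative factor $(1+\beta_n)$: you stop at $\tau_m=\inf\{n:\sum_{k\le n}w_k>m\}$, so that $S_{n\wedge\tau_m}+m$ is a nonnegative integrable supermartingale, and then use $\{\sum_n w_n<\infty\}=\bigcup_m\{\tau_m=\infty\}$. The only nit is in Step~1, where you need $v_k,w_k$ to be $\sF_n$-measurable for $k\le n$, not just $k<n$; adaptedness provides this.
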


The preceding lemma yields the following central stochastic approximation result. 
Below, we consider 
$$
\text{a nonempty, compact, convex set}\ \cC\subset\R^p,
$$
a filtration $(\sF_n)_{n\ge0}$, and random functions
$\Psi_n:\Omega\times\cC\to\R^p$ that are
$(\sF_n\otimes\cB(\cC),\cB(\R^p))$-measurable,
where $\cB(\cdot)$ denotes the Borel sigma-algebra.
The $\sF_n$-measurable sequences 
$k_n\in\R$ and $b_n,\xi_n\in\R^p$ satisfy
$$
k_n\ge0,\qquad
\xi_n\in L^2(\sF_n;\R^p),\qquad
\E[\xi_{n+1}\mid\sF_n]=0,
$$
and
\begin{equation}
\sum_n k_n=\infty,
\qquad
\sum_n k_n^2
\left(1+\E[\|\xi_{n+1}\|^2\mid\sF_n]\right)<\infty
\quad\text{a.s.}
\label{eq:sa-steps}
\end{equation}
Starting from an $\sF_0$-measurable random variable $\fz_0\in\cC$,
the sequence $\fz_n$ is recursively defined by
\begin{equation*}
\fz_{n+1}
:=\Proj_\cC\left( \fz_n-k_n\bigl(\Psi_n(\fz_n)+b_n+\xi_{n+1}\bigr) \right),
\qquad n\ge0.
\end{equation*}

\begin{Prop}[Projected stochastic approximation]
\label{prop:sa}
Let $\cC,\sF_n,\Psi_n,b_n,k_n,\xi_n,\fz_n$ be given as above
satisfying \eqref{eq:sa-steps}.
Suppose that there exist $\fz^\star\in\cC$, $b, c_\star \ge 0$,
$\mu>0$ such that, for every $n\ge0$ and $\fz\in\cC$,
\begin{equation}
\|\Psi_n(\fz)\|\le c_\star,
\qquad
\ip{\fz-\fz^\star}{\Psi_n(\fz)} \ge \mu \|\fz-\fz^\star\|^2,
\qquad
\limsup_{n\to\infty}\|b_n\|\le b
\qquad\text{a.s.}
\label{eq:sa-coercivity}
\end{equation}
Then
$\limsup_{n\to\infty}\|\fz_n-\fz^\star\|  \le b/\mu$ almost surely.
\end{Prop}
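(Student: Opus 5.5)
We run the standard Lyapunov argument with $Z_n:=\|\fz_n-\fz^\star\|^2$, combined with Lemma~\ref{lem:robbins-siegmund}, but with the bias handled through a margin. Since $\fz^\star\in\cC$ and $\Proj_\cC$ is nonexpansive, $\Proj_\cC(\fz^\star)=\fz^\star$ gives
\begin{equation*}
Z_{n+1}\le \bigl\|\fz_n-\fz^\star-k_n(\Psi_n(\fz_n)+b_n+\xi_{n+1})\bigr\|^2 .
\end{equation*}
Expanding the square and taking $\E[\cdot\mid\sF_n]$ kills the cross term with $\xi_{n+1}$, because $\fz_n,\Psi_n(\fz_n),b_n$ are $\sF_n$-measurable. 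Using the coercivity in \eqref{eq:sa-coercivity} and Cauchy--Schwarz on the bias term, we get
\begin{equation*}
\E[Z_{n+1}\mid\sF_n]\le Z_n-2k_n\mu Z_n+2k_n\|b_n\|\sqrt{Z_n}+k_n^2\,c\bigl(1+\|b_n\|^2+\E[\|\xi_{n+1}\|^2\mid\sF_n]\bigr),
\end{equation*}
with $c$ depending on $c_\star$. Here $\|b_n\|$ is eventually bounded by $b+1$ a.s.

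The bias must not be summable, so we do not treat it as $w_n$. Fix $\epsilon>0$ and set $r:=b/\mu+\epsilon$. By the triangle inequality and compactness of $\cC$, the quantity $\sqrt{Z_n}$ is bounded by $\mathrm{diam}(\cC)$. Consider the nonnegative function $\phi(z):=\bigl((\sqrt z-r)^+\bigr)^2$, or more simply $Z_n$ itself on the event $\{\sqrt{Z_n}> r\}$. When $\sqrt{Z_n}\ge r$ and $n$ is large enough that $\|b_n\|\le b+\mu\epsilon/2$, we have
\begin{equation*}
-2\mu Z_n+2\|b_n\|\sqrt{Z_n}\le -2\sqrt{Z_n}\bigl(\mu\sqrt{Z_n}-b-\mu\epsilon/2\bigr)\le -\mu\epsilon\sqrt{Z_n}\le-\mu\epsilon r.
\end{equation*}
The drift is therefore strictly negative outside the ball of radius $r$. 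When $\sqrt{Z_n}<r$, the drift is at most $k_n$ times a bounded quantity, which is not summable. This is the main obstacle: near the boundary of the ball, the increments must be shown to be asymptotically small.

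To overcome it, we apply Robbins--Siegmund to $U_n:=(\sqrt{Z_n}-r)^{+2}$ directly. Since $\fz\mapsto\bigl((\|\fz-\fz^\star\|-r)^+\bigr)^2$ is convex and $C^1$ with gradient $2(\|\fz-\fz^\star\|-r)^+\frac{\fz-\fz^\star}{\|\fz-\fz^\star\|}$, and that gradient is 2-Lipschitz (it is twice the gradient of half the squared distance to the ball $B(\fz^\star,r)$), a descent-lemma expansion gives
\begin{equation*}
U_{n+1}\le U_n-2k_n(\sqrt{Z_n}-r)^+\Bigl\langle\tfrac{\fz_n-\fz^\star}{\|\fz_n-\fz^\star\|},\Psi_n(\fz_n)+b_n+\xi_{n+1}\Bigr\rangle+k_n^2\|\Psi_n+b_n+\xi_{n+1}\|^2 .
\end{equation*}
We must also check that the projection does not increase $U$. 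The cleanest route is to note that $(\|\cdot-\fz^\star\|-r)^+=\mathrm{dist}(\cdot,B(\fz^\star,r))$, whose behavior under $\Proj_\cC$ is unclear. We therefore instead write $\fz_{n+1}=y_n+(\Proj_\cC(y_n)-y_n)$ and use $\|\Proj_\cC y-\fz^\star\|\le\|y-\fz^\star\|$, so that $U$, being a monotone function of $\|\cdot-\fz^\star\|$, decreases under projection. The inner-product term has conditional expectation at least $\mu\sqrt{Z_n}-\|b_n\|\ge \mu\epsilon/2>0$ whenever $(\sqrt{Z_n}-r)^+>0$ and $n$ is large. Robbins--Siegmund, applied from a large random time onward via a stopping-time or localization argument (the $\limsup$ bound on $b_n$ holds only a.s.), then gives that $U_n$ converges and $\sum_n k_n(\sqrt{Z_n}-r)^+<\infty$.

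Since $\sum k_n=\infty$, we get $\liminf_n(\sqrt{Z_n}-r)^+=0$, and together with convergence of $U_n$ this yields $U_n\to0$. Hence $\limsup_n\|\fz_n-\fz^\star\|\le r=b/\mu+\epsilon$, and letting $\epsilon\downarrow0$ along a countable sequence completes the proof.
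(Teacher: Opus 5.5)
Your proposal is correct and is essentially the paper's proof. Your $U_n=\bigl((\|\fz_n-\fz^\star\|-r)^+\bigr)^2$ is exactly the paper's $\fd_n^2=\mathrm{dist}(\fz_n,\cC\cap B(\fz^\star,b_\star))^2$, and you use the margin on $\|b_n\|$, the stopping-time localization (made explicit in the paper as $\sigma_m=\inf\{n\ge m:\|b_n\|>b+\eps\}$), Robbins--Siegmund, and $\sum_n k_n=\infty$ in the same way. The one difference is how the projection is handled, and it does not change the one-step inequality: you use the descent lemma together with radial monotonicity of $U$ and $\|\Proj_\cC y-\fz^\star\|\le\|y-\fz^\star\|$, whereas the paper compares $\fz_{n+1}$ with the nearest point $y_n\in\cC\cap B(\fz^\star,b_\star)$ and applies nonexpansiveness of $\Proj_\cC$ between $z_n$ and $y_n$.
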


\begin{proof}
We fix a constant $0<\eps\le 1$ and suppress its 
dependence in the notation throughout the proof.
Set
$b_\star:= (b + 2 \eps)/\mu$, $\cS:=\cC\cap B(\fz^\star,b_\star)$,
$d_n:=\|\fz_n-\fz^\star\|$, and
$\mathfrak{d}_n:= \|\fz_n-y_n\|$,
where $y_n$ is given by
$$
y_n = \gamma_n \, \fz^\star + (1-\gamma_n)\, \fz_n,
\qquad \text{with}\qquad 
 \gamma_n:= \frac{1}{d_n}\, \left(d_n- \min\{ b_\star, d_n\}\right)\ \text{when}\ d_n>0,
$$
and $y_n=\fz_n=\fz^\star$ when $d_n=0$.
It is clear that  $y_n\in \cS$ by the convexity of $\cC$ and
\begin{equation}
\fd_n:= \|\fz_n-y_n\| = \mathrm{dist}(\fz_n,\cS) 
= \max\{d_n -b_\star,0\} \le \|\fz_n-z\|, 
\qquad \forall\  z\in \cS.
\label{eq:Sk}
\end{equation}
By definition, $d_n=\|\fz_n -\fz^\star\| =
\fd_n +\|y_n -\fz^\star \|$.  Since $y_n \in \cS$, $\|y_n -\fz^\star\| \le b_\star$.
Hence, 
\begin{equation}
\label{eq:dDR}
\fd_n \le d_n = \|\fz_n -\fz^\star\| \le \fd_n + b_\star, \qquad
d_n \gamma_n = \fd_n.
\end{equation}

Suppose that $\|b_n\|\le b+ \eps$.  We claim that
\begin{equation}
\ip{\fz_n-y_n}{\Psi_n(\fz_n)+b_n}\ge \eps  \fd_n = \eps \|\fz_n-y_n\|.
\label{eq:sa-drift}
\end{equation}
Indeed, this is immediate when $\fd_n=\|\fz_n-y_n\|=0$. Otherwise,
$$
\begin{aligned}
\ip{\fz_n-y_n}{\Psi_n(\fz_n)+b_n} & 
= \gamma_n \ip{\fz_n-\fz^\star}{\Psi_n(\fz_n)+b_n}
= \frac{\fd_n}{d_n}\, \left( \ip{\fz_n-\fz^\star}{\Psi_n(\fz_n)} 
+  \ip{\fz_n-\fz^\star}{b_n}\right)\\
 & \ge \frac{\fd_n}{d_n}\, \left( \mu  \|\fz_n-\fz^\star\|^2 
 -  \|\fz_n-\fz^\star\| \|b_n\|\right)
 = \fd_n\, ( \mu d_n - \|b_n\| ) \ge  \fd_n\, (\mu d_n - b-\eps).
\end{aligned}
$$
Since $\fd_n=\max\{d_n -b_\star,0\}>0$,  $\mu d_n >\mu b_\star =b+2\eps$. Hence
$\mu d_n - b- \eps  > \eps$, proving the claim.

For $m\ge1$, define a stopping time
$\sigma_m:=\inf\{n\ge m:\|b_n\| > b + \eps\}$, 
where by convention it is equal to
infinity if the set is empty. Then
$\|b_n \| \le b+ \eps$ for every $m \le n < \sigma_m$.
Suppressing the dependence on $m$, set 
$Z_n :=(\fd_{n\land \sigma_m})^2$.
We claim that,  for $n\ge m$, 
\begin{equation}
\label{eq:Zn-claim}
\E[Z_{n+1}\mid\sF_n] \le Z_n
-2 \eps k_n \fd_n\one_{\{n<\sigma_m\}}+C_\star\, k_n^2
\left(1+\E[\|\xi_{n+1}\|^2\mid\sF_n]\right)
\one_{\{n<\sigma_m\}},
\end{equation}
for some constant $C_\star$, independent of $n$. 

We first complete the proof of the Proposition and then prove the claim.
In view of \eqref{eq:sa-steps}, we  have
$$
\sum_{n}k_n^2 \left(1+\E[\|\xi_{n+1}\|^2\mid\sF_n]\right)
\one_{\{m \le n<\sigma_m\}}<\infty.
$$
Hence we may now apply Lemma~\ref{lem:robbins-siegmund} 
to conclude that  almost surely $Z_n$ is convergent
and  that $\sum_{n} k_n \fd_n\one_{\{m \le n<\sigma_m\}} <\infty$.
In particular, on the set $\Omega_m:=\{\sigma_m=\infty\}$, we have 
$\sum_{n} k_n \fd_n <\infty$.  
As by hypothesis \eqref{eq:sa-steps}, $\sum_n k_n =\infty$, the limit of
$\fd_n$ must be zero on $\Omega_m$.   Hence by \eqref{eq:dDR},
$$
\limsup_{n \to \infty} \|\fz_n-\fz^\star\|
\le  \lim_{n \to \infty} \fd_n + b_\star =b_\star ,
 \qquad \text{on}\ \Omega_m,
$$
for every $\eps > 0$ and $m \ge1$.
Since $\limsup_{n \to \infty} \|b_n\| \le b$, the definitions of $\sigma_m$
and $\Omega_m$ directly imply that
$\P(\bigcup_m \Omega_m)=1$.  Hence, for every $\eps > 0$,
$$ 
\limsup_{n\to\infty}\|\fz_n-\fz^\star\| \le b_\star =(b+2\eps)/\mu,
\qquad\text{a.s.}
$$ 
A direct limiting argument as $\eps$ tends to zero completes the proof of the Proposition.

We continue with the proof of the claimed inequality \eqref{eq:Zn-claim}.
Let  $z_n:=\fz_n-k_n(\Psi_n(\fz_n)+b_n+\xi_{n+1})$.  Since $y_n\in \cS \subset \cC$, \eqref{eq:Sk} 
implies that $\fd_{n+1} \le \|\fz_{n+1}-y_n\|$, and
$y_n=\Proj_\cC(y_n)$. Hence
$$ 
(\fd_{n+1})^2
\le\|\fz_{n+1}-y_n\|^2 =\|\Proj_\cC(z_n)-\Proj_\cC(y_n)\|^2 \le \|z_n-y_n\|^2,
$$ 
where the last inequality uses the fact that projections are non-expansive.  
Using the definitions, we directly calculate that
\begin{align*}
\|z_n-y_n\|^2 &= \|\fz_n-y_n-k_n(\Psi_n(\fz_n)+b_n+\xi_{n+1}) \|^2\\
&= (\fd_n)^2- 2 k_n 
\ip{\fz_n-y_n}{\Psi_n(\fz_n)+b_n+\xi_{n+1}}
+ k_n^2 \|\Psi_n(\fz_n)+b_n+\xi_{n+1}\|^2.
\end{align*}
Since $\E[\xi_{n+1}\mid\sF_n]=0$ and $(k_n,b_n,\Psi_n)$ are $\sF_n$-measurable,
$\E[k_n \ip{\fz_n-y_n}{\xi_{n+1}} \mid\sF_n] =0$.
By \eqref{eq:sa-drift}
$$
\E[k_n \ip{\fz_n-y_n}{\Psi_n(\fz_n)+b_n+\xi_{n+1}}\mid\sF_n]
= k_n \E[\ip{\fz_n-y_n}{\Psi_n(\fz_n)+b_n}\mid\sF_n] \ge 
\eps k_n \fd_n.
$$
Moreover, for $m \le n<\sigma_m$, we have $\|b_n\|\le b+ \eps \le  b+ 1$
and, by hypothesis, $\|\Psi_n(\fz_n)\|\le c_\star$. Hence,
$$
\E[\|\Psi_n(\fz_n)+b_n+\xi_{n+1}\|^2\mid\sF_n]
=\|\Psi_n(\fz_n)+b_n\|^2
 +\E[\|\xi_{n+1}\|^2\mid\sF_n] \le C_\star \left(1+\E[\|\xi_{n+1}\|^2\mid\sF_n]\right),
$$
where $C_\star:=\max\{1,(c_\star+b+1)^2\}$.

Combining all the above inequalities, we arrive at
\begin{align*}
\E[(\fd_{n+1})^2 \mid\sF_n]
\le& (\fd_{n})^2
-2 \eps k_n\fd_n +C_\star k_n^2 \left(1+\E[\|\xi_{n+1}\|^2\mid\sF_n]\right),
\qquad \text{for} \ m \le n < \sigma_m.
\end{align*}
Since $Z_n :=(\fd_{n\land \sigma_m})^2$ is constant
for $n \ge \sigma_m$, the above inequality implies \eqref{eq:Zn-claim}.
\end{proof}

\section{Existence and uniqueness of the MPE}
\label{sec:mpe}

\begin{proof}[Proof of Theorem~\ref{thm:mpe}]
For $W\in\cB$, let $S_h(W)(x)\in\cU$ be the unique solution to
$$
\langle u-S_h(W)(x),\, \Phi[W](x,S_h(W)(x))\rangle\le0,
\qquad u\in\cU.
$$
Existence follows from Brouwer's theorem applied to the continuous map 
$a\mapsto\Proj_{\cU}(a+\Phi[W](x,a))$. Its fixed points are precisely the displayed 
variational-inequality solutions. Indeed, since $\cU$ is a convex set and  $a = \Proj_\cU(z)$, 
we have  $\ip{u-a}{z-a} \le 0$, for all $u\in\cU$.
Applying this with $z=a+\Phi[W](x,a)$ shows that  
$$
a = \Proj_\cU(a+\Phi[W](x,a))\qquad \Leftrightarrow \qquad 
\ip{u-a}{\Phi[W](x,a)} \le 0\quad \forall u\in\cU.
$$
Then, Lemma~\ref{lem:bellman} shows that $-\Phi[W]$ is strongly monotone, 
so that such solutions are indeed unique.

For $W,\widetilde W\in\cB$, write $a=S_h(W)(x)$, $b=S_h(\widetilde W)(x)$. 
Testing the variational inequalities against one another, adding and subtracting $\Phi[W](x,b)$, 
and using \eqref{eq:bellman-monotonicity} and \eqref{eq:Phi-W-lip}, leads to 
\begin{align*}
0&\le \ip{a-b}{\Phi[W](x,a)-\Phi[\widetilde W](x,b)}
= \ip{a-b}{[\Phi[W](x,a)-\Phi[W](x,b)]+[ \Phi[W](x,b)-\Phi[\widetilde W](x,b)]}\\
&\le - \mu_h\|a-b\|^2 + L_\star\|a-b\|\|W-\widetilde W\|_\infty,
\end{align*}
so that 
\begin{equation}
\|S_h(W)-S_h(\widetilde W)\|_{\infty,2}
\le\frac{L_\star}{\mu_h}\|W-\widetilde W\|_\infty.
\label{eq:selector-lipschitz}
\end{equation}
Define an operator $\cT_h:\cB\to\cB$ by
$$
(\cT_hW)^i(x):=Q^i[W](x,S_h(W)(x)).
$$
The identity $hC_r+\beta^iM^i_\star=M^i_\star$ shows that $\cT_h$ indeed maps $\cB$ into itself. 
Next, let $a,b$ be defined as above. By the triangle inequality, 
the Lipschitz estimate \eqref{eq:Q-action-lip} and \eqref{eq:selector-lipschitz}, we obtain
\begin{align*}
|Q^i[W](x,a)-Q^i[\widetilde W](x,b)| &\le |Q^i[W](x,a)-Q^i[W](x,b)| + |Q^i[W](x,b)-Q^i[\widetilde W](x,b)|\\
&\le \Big(\frac{L_hL_\star}{\mu_h}  + \beta_\star \Big) \|W-\widetilde W\|_\infty
=\kappa_h\|W-\widetilde W\|_\infty.
\end{align*}
Taking the maximum over ${i\in[N]\text{ and }x\in\cX}$ shows
\begin{equation}
\|\cT_hW-\cT_h\widetilde W\|_\infty\le \kappa_h\|W-\widetilde W\|_\infty.
\label{eq:bellman-contraction}
\end{equation}
Banach's contraction theorem gives a unique fixed point $V^\star\in\cB$ since $\kappa_h<1$. 
Set $\alpha^\star=S_h(V^\star)$. By a standard verification argument, this yields the unique MPE.
\end{proof}

\section{Convergence proof}
\label{secLconvergence}
\begin{proof}[Proof of Theorem~\ref{thm:convergence}]
Let $(\alpha^\star,V^\star)$ be the unique MPE. 
In the proof below, $K<\infty$ denotes a deterministic constant, independent of $n$, 
whose value may change from line to line.

\vspace{-10pt}
\paragraph{Step 1: Uniform bounds.}
By assumption, $\|\widehat V^{i,0}\|_\infty\le{\overline M}$,
 and $M_\star= hC_r/(1-\beta_\star) \le \overline M$. 
 Suppose that at any step $n$, $\|\widehat V^{i,n}\|_\infty\le{\overline M}$. Then
$$
|\widetilde V^{i,n+1}|
\le hC_r+\beta^i{\overline M}
\le (1-\beta_\star)  {\overline M} + \beta_\star {\overline M}
={\overline M}.
$$

The value update is a convex combination of two numbers in
$[-{\overline M},{\overline M}]$, and unvisited entries do not change.
Induction therefore gives
\begin{equation}
|\widehat V^{i,n}(x)|\le{\overline M},
\qquad
|\widetilde V^{i,n+1}|\le{\overline M},
\qquad \forall\ i\in[N], x\in\cX, n\ge0.
\label{eq:iterate-bounds}
\end{equation}
Consequently, all policy and value estimates are also uniformly bounded.

\vspace{-10pt}
\paragraph{Step 2: Infinite visits.}
We claim that, for every $x\in\cX$,
\begin{equation}
\sum_{n=0}^{\infty}\eta_n\one_{\{X_n=x\}}=\infty,
\qquad
\sum_{n=0}^{\infty}\nu_n\one_{\{X_n=x\}}=\infty
\qquad\text{a.s.}
\label{eq:weighted-visitation}
\end{equation}
Indeed, fix $x\in\cX$. For $n\ge1$, conditioning first on $\cG_{n-1}$ and then on $\cF_{n-1}$,
$$
\P(X_n=x\mid\cF_{n-1})
=\E[\P(x\mid X_{n-1},A_{n-1})\mid\cF_{n-1}]
\ge\zeta.
$$
We first prove the claim for $\eta_n$. The numerical assumptions give $\sum_n\eta_n=\infty$ and $\sum_n\eta_n^2<\infty$, the latter follows from \eqref{eq:policy-sequences} since $\delta_n\le1$ eventually.

Consider the series
$$
\sum_{n=1}^{\infty}\eta_n\, 
(\one_{\{X_n=x\}}-\P(X_n=x\mid\cF_{n-1})).
$$
Each summand is $\cF_n$-measurable, has conditional mean zero given $\cF_{n-1}$, and a second moment of at most $\eta_n^2$.
Its partial sums therefore form a martingale which is bounded in $L^2$, and the martingale convergence theorem implies that the series converges almost surely.

Now write
$$
\begin{aligned}
\sum_{n=1}^{m}\eta_n\one_{\{X_n=x\}}
={}&\sum_{n=1}^{m}\eta_n\P(X_n=x\mid\cF_{n-1})+\sum_{n=1}^{m}\eta_n
\bigl(\one_{\{X_n=x\}}-\P(X_n=x\mid\cF_{n-1})\bigr).
\end{aligned}
$$
The first term is at least $\zeta\sum_{n=1}^{m}\eta_n$, which tends to infinity. The second converges to a finite limit a.s., proving the first assertion in \eqref{eq:weighted-visitation}.

The same argument with $\nu_n$ in place of $\eta_n$ proves the second
assertion, using the assumption \eqref{eq:value-sequences}. 

\vspace{-10pt}
\paragraph{Step 3: Conditional estimates.}
We use the definition of $\widetilde V^{i,n+1}$ in \eqref{eq:bellman-sample} 
and Assumption \ref{asm:observations} on the 
observations to arrive at
\begin{align}
\nonumber
\E[\widetilde V^{i,n+1}\mid\cG_n] &= 
\E[\fR_{n+1}^i+\beta^i\widehat V^{i,n}(X_{n+1}) \mid\cG_n]
= hr^i(X_n,A_n)+\beta^i \E[\widehat V^{i,n}(X_{n+1}) \mid\cG_n]\\
\label{eq:sample-conditional-mean}
&= hr^i(X_n,A_n)+\beta^i\mathsf P[\widehat V^{i,n}](X_n,A_n) =Q^i[\widehat V^n](X_n,A_n).
\end{align}
Consider Lemma~\ref{lem:gradient} with $\sF=\cF_n$, $x=X_n$, $a=\widehat\alpha^n(X_n)$, $W=\widehat V^n$, 
$\delta=\delta_n$, $\lambda=\lambda_n$, and $\widetilde V=\widetilde V^{n+1}$.
By \eqref{eq:iterate-bounds} $W, \widetilde V$ are bounded by $\overline  M$.
Hence, in view of \eqref{eq:sample-conditional-mean} the gradient
lemma applies and we conclude that
there is a constant $K$, independent of $n$, satisfying
$$
\|\E[\widehat D^{n+1}\mid\cF_n]
-\Phi[\widehat V^n](X_n,\widehat\alpha^n(X_n))\|
\le K(\delta_n+\lambda_n).
$$
Adding and subtracting $\Phi[V^\star](X_n,\widehat\alpha^n(X_n))$ 
and using the Lipschitz estimate \eqref{eq:Phi-W-lip}, we obtain
\begin{equation}
\|\E[\widehat D^{n+1}\mid\cF_n]
-\Phi[V^\star](X_n,\widehat\alpha^n(X_n))\|\le
L_\star\|\widehat V^n-V^\star\|_\infty+K(\delta_n+\lambda_n).
\label{eq:gradient-mean}
\end{equation}
Additionally, we also have the error estimate,
\begin{equation}
\E\bigl[
\|\widehat D^{n+1}-\E[\widehat D^{n+1}\mid\cF_n]\|^2
\mid\cF_n\bigr]
\le K\delta_n^{-2}.
\label{eq:gradient-noise}
\end{equation}

We next use \eqref{eq:sample-conditional-mean},
the Nash equation \eqref{eq:equilibrium-value}, and 
the definition of the Bellman operator $Q^i$ to arrive at
\begin{align*}
\bigl|\E[\widetilde V^{i,n+1}\mid\cG_n] -V^{i,\star}(X_n)\bigr|
&= \bigl|Q^i[\widehat V^n](X_n,A_n)-Q^i[V^\star](X_n,\alpha^\star(X_n))\bigr|\\
&  = \bigl|\beta^i\mathsf P[\widehat V^{i,n}-V^{i,\star}](X_n,A_n)+
Q^i[V^\star](X_n,A_n) -Q^i[V^\star](X_n,\alpha^\star(X_n))\bigr|\\
& \le \beta^i\|\widehat V^n-V^\star\|_\infty + 
L_h\|A_n-\alpha^\star(X_n)\|,
\end{align*}
where we use the Lipschitz estimate
\eqref{eq:Q-action-lip} at $V^\star$ and
the identity $L_Q(M_\star)=L_h$.
By the definition of $A_n$, 
\begin{equation}
\|A_n-\widehat\alpha^n(X_n)\| =\|-\lambda_n \widehat\alpha^n(X_n)
+ \delta_n \Delta_n \|
\le\lambda_n\max_{u\in\cU}\|u\|+\sqrt N\,\delta_n.
\label{eq:perturbation-distance}
\end{equation}
The right-hand side can be estimated by $K (\delta_n +\lambda_n)$
for some  constant  $K$, independent of $n$.
Combining these inequalities, conditioning on $\cF_n\subset\cG_n$, and using the tower property, we arrive at
\begin{equation}
\label{eq:value-target-bound}
\bigl|\E[\widetilde V^{i,n+1}\mid\cF_n] -V^{i,\star}(X_n)\bigr| \le
\beta^i\|\widehat V^n-V^\star\|_\infty
+ L_h\|\widehat\alpha^n(X_n)-\alpha^\star(X_n)\|
+ K (\delta_n +\lambda_n).
\end{equation}

\vspace{-10pt}
\paragraph{Step 4: Policy error bound.}
By Step~1, the following quantities are finite,
$$
\cE_V:=\underset{\Omega}{\mathrm{ess\,sup}}\,
\big(\limsup_{n\to\infty}\|\widehat V^n-V^\star\|_\infty\big),
\qquad
\cE_\alpha:=\underset{\Omega}{\mathrm{ess\,sup}}\, 
\big(\limsup_{n\to\infty}\|\widehat\alpha^n-\alpha^\star\|_{\infty,2}\big).
$$
Fix $x\in\cX$, and write
$\alpha_n:=\widehat\alpha^n(x)$, 
$\alpha^\star:= \alpha^\star(x)$, and
$\Psi(a):=-\Phi[V^\star](x,a)$.
The function $\Psi$ is bounded on $\cU$. By
\eqref{eq:equilibrium-VI} and \eqref{eq:bellman-monotonicity},
\begin{equation}
\ip{\alpha-\alpha^\star}{\Psi(\alpha)}
\ge\mu_h\|\alpha-\alpha^\star\|^2,
\qquad \alpha \in \cU.
\label{eq:policy-coercivity}
\end{equation}
Further set
$k_n:= \eta_n\one_{\{X_n=x\}}$, $b_n:=\one_{\{X_n=x\}}
\bigl(-\E[\widehat D^{n+1}\mid\cF_n]-\Psi(\alpha_n)\bigr)$,
$\xi_{n+1} :=\E[\widehat D^{n+1}\mid\cF_n]-\widehat D^{n+1}$,
so that the policy update equation \eqref{eq:value-update}
can be written as 
$$
\alpha_{n+1}
=\Proj_{\cU}\left(
\alpha_n-k_n\bigl(\Psi(\alpha_n)+b_n+\xi_{n+1}\bigr) \right).
$$
When $X_n=x$, the expression in parentheses equals
$-\widehat D^{n+1}$, so the update coincides with \eqref{eq:value-update}.
Otherwise, the entry remains unchanged.

The random variable $b_n$ is $\cF_n$-measurable, and
\eqref{eq:gradient-mean}, together with $\delta_n,\lambda_n\to0$,
gives
$$
\limsup_{n\to\infty}\|b_n\|\le L_\star \cE_V
\qquad\text{a.s.}
$$
Towards applying Proposition~\ref{prop:sa}, observe that 
$k_n=\eta_n\one_{\{X_n=x\}} \ge 0$ is
$\cF_n$-measurable, and by  \eqref{eq:weighted-visitation} the series is divergent.
Also, $\xi_{n}\in L^2(\cF_{n})$, $\E[\xi_{n+1}\mid\cF_n]=0$, and 
by \eqref{eq:gradient-noise}
$$
\sum_n  k_n^2
\left(1+\E[\|\xi_{n+1}\|^2\mid\cF_n]\right)\le
\sum_n\eta_n^2+K\sum_n\frac{\eta_n^2}{\delta_n^2}<\infty.
$$
In view of  \eqref{eq:policy-coercivity} and the above observations, 
Proposition~\ref{prop:sa} applies 
with the set $\cU$, target $\alpha^*=\alpha^\star(x)$, yielding
$$
\limsup_{n\to\infty}\|\widehat\alpha^n(x)-\alpha^\star(x)\|
=\limsup_{n\to\infty}\|\alpha_n-\alpha^\star\|
\le L_\star \cE_V /\mu_h \qquad\text{a.s.},
$$
for every $x \in \cX$. Taking the maximum over the finitely many states,
we conclude that
\begin{equation}
\cE_\alpha \le L_\star \cE_V /\mu_h \qquad\text{a.s.}
\label{eq:policy-gain}
\end{equation}

\vspace{-10pt}
\paragraph{Step 5: Value error bound.}
Fix $i\in[N]$, $x\in\cX$. Write
$v_n:=\widehat V^{i,n}(x)$, $v^\star:=V^{i,\star}(x)$, and set
$$
\widehat b_n:=\one_{\{X_n=x\}}
\bigl(v^\star-\E[\widetilde V^{i,n+1}\mid\cF_n]\bigr),\qquad 
\widehat \xi_{n+1} :=\E[\widetilde V^{i,n+1}\mid\cF_n]-\widetilde V^{i,n+1}.
$$
We rewrite the value update \eqref{eq:value-update}
in the form utilized in Proposition~\ref{prop:sa} as follows
$$
v_{n+1} =\Proj_{[-{\overline M},{\overline M}]}
\left(v_n-\widehat k_n\bigl(\widehat\Psi(v_n)+\widehat b_n+ \widehat \xi_{n+1}\bigr)\right),
$$
where $\widehat k_n:=\nu_n\one_{\{X_n=x\}}$ and $\widehat \Psi(v):= (v-v^\star)$.
As in the previous step, when $X_n=x$, the expression in parentheses
equals $v_n-\widetilde V^{i,n+1}$, so the update coincides with
\eqref{eq:value-update}. Otherwise, the entry remains unchanged.

Both $v_n$ and $v^\star$ lie in $[-{\overline M},{\overline M}]$.
The map $v\mapsto \widehat \Psi(v)$ satisfies \eqref{eq:sa-coercivity}
with $\mu=1$.
The random variables $\widehat b_n$ are $\cF_n$-measurable, and
\eqref{eq:value-target-bound} and \eqref{eq:policy-gain} give
$$
\limsup_{n\to\infty}|\widehat b_n| \le \beta_\star \cE_V+L_h\cE_\alpha
\le\beta_\star \cE_V+L_h\frac{L_\star}{\mu_h}\, \cE_V
=\kappa_h\, \cE_V
\qquad\text{a.s.}
$$
Moreover, $\widehat\xi_{n+1} $ is $\cF_{n+1}$-measurable and satisfies 
$\E[\widehat\xi_{n+1} \mid\cF_n]=0$. 
Using $|\widehat\xi_{n+1} |\le2{\overline M}$,  
$$
\sum_n\bigl(\nu_n\one_{\{X_n=x\}}\bigr)^2
\left(1+\E[|\widehat\xi_{n+1} |^2\mid\cF_n]\right)\le
\bigl(1+4\overline M^2\bigr)\sum_n\nu_n^2<\infty.
$$
Finally, $\widehat k_n=\nu_n\one_{\{X_n=x\}} \ge 0$ is
$\cF_n$-measurable and has an infinite sum by
\eqref{eq:weighted-visitation}.

The above observations imply that we may apply Proposition~\ref{prop:sa} to obtain
$$
\limsup_{n\to\infty}|\widehat V^{i,n}(x)-V^{i,\star}(x)|
\le\kappa_h\, \cE_V \qquad\text{a.s.},
$$
for every $x \in \cX$.  Taking the maximum over the finitely many agents 
and states shows that $\cE_V \le \kappa_h \, \cE_V$.
Since $\kappa_h<1$ by hypothesis and $\cE_V\ge 0$ by definition,
this implies that $\cE_V=0$.

\vspace{-10pt}
\paragraph{Step 6: Conclusion.}
We have shown that $\cE_V=0$ and, by \eqref{eq:policy-gain},
we also have $\cE_\alpha=0$.
Finally, by \eqref{eq:perturbation-distance},
\begin{equation*}
\|A_n-\alpha^\star(X_n)\|
\le\|\widehat\alpha^n-\alpha^\star\|_{\infty,2}
+\lambda_n\max_{u\in\cU}\|u\|+\sqrt N\,\delta_n
\longrightarrow0
\qquad\text{a.s.}\qedhere
\end{equation*}
\end{proof}

\section{Proof of complexity}
\label{sec:rate}

We use the following elementary fact.

\begin{Lem}
\label{lem:polynomial-comparison}
Let $(u_n)_{n\ge1}$ be a sequence of non-negative real numbers. 
Suppose that, for some constants $\gamma \ge 0$ and $\kappa>\theta>0$,
$$
u_{n+1}\le\left(1-\frac{\kappa}{n}\right)u_n
+\frac{\gamma}{n^{1+\theta}}
$$
for all sufficiently large $n$. Then there is a constant $\widehat c$
such that $u_n\le \widehat c \, n^{-\theta}$ for every $n\ge1$.
\end{Lem}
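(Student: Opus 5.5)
The plan is to rescale by the target rate and show that the rescaled sequence cannot grow. Set $w_n:=n^{\theta}u_n$, choose $n_0\ge 2\kappa$ large enough that the recursion holds for all $n\ge n_0$, and note that then $1-\kappa/n\ge 1/2>0$. Multiplying the recursion by $(n+1)^\theta$ gives
$$
w_{n+1}\le\Big(1+\frac1n\Big)^{\theta}\Big(1-\frac{\kappa}{n}\Big)w_n+\gamma\,\frac{(n+1)^\theta}{n^{1+\theta}} .
$$
The whole argument then reduces to controlling the multiplicative factor in front of $w_n$.

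For that factor I will use $(1+1/n)^\theta\le 1+\theta'/n$ for any fixed $\theta'\in(\theta,\kappa)$ and all large $n$; this follows from $(1+x)^\theta=1+\theta x+O(x^2)$. Consequently
$$
\Big(1+\frac1n\Big)^\theta\Big(1-\frac\kappa n\Big)\le 1-\frac{\kappa-\theta'}{n}+\frac{\theta'\kappa}{n^2}\le 1-\frac{\epsilon}{n}
$$
with $\epsilon:=(\kappa-\theta')/2>0$, again for all large $n$. The additive term is at most $2^\theta\gamma/n$. Enlarging $n_0$ if necessary, we therefore get, for $n\ge n_0$,
$$
w_{n+1}\le\Big(1-\frac{\epsilon}{n}\Big)w_n+\frac{2^\theta\gamma}{n}.
$$

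Now set $c:=\max\{w_{n_0},\,2^\theta\gamma/\epsilon\}$ and argue by induction that $w_n\le c$ for every $n\ge n_0$. If $w_n\le c$, then
$$
w_{n+1}\le c-\frac{\epsilon c-2^\theta\gamma}{n}\le c,
$$
because $\epsilon c\ge 2^\theta\gamma$; this step also uses $1-\epsilon/n\ge0$. This gives $u_n\le c\,n^{-\theta}$ for $n\ge n_0$. The finitely many indices $n<n_0$ are handled by setting $\widehat c:=\max\{c,\max_{n<n_0}n^\theta u_n\}$, which proves the claim for every $n\ge1$.

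The only delicate point is the strict inequality $\kappa>\theta$. It is exactly what leaves a margin $\epsilon>0$ after absorbing the growth factor $(1+1/n)^\theta$. Without this margin the induction would fail: the $1/n$ forcing term would accumulate like a harmonic sum and produce an extra $\log n$ factor. Everything else in the argument is routine bookkeeping of "sufficiently large $n$".
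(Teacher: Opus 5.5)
Your proof is correct and is essentially the paper's argument: an induction showing that $n^{\theta}u_n$ stays below a constant chosen large relative to $\gamma$ divided by the margin coming from $\kappa>\theta$. The only difference is bookkeeping: the paper inducts directly on $u_n\le\widehat c\,n^{-\theta}$ using the exact convexity bound $(1+t)^{-\theta}\ge 1-\theta t$, which keeps the full margin $\kappa-\theta$ for every $n\ge n_0>\kappa$ and so needs neither your auxiliary $\theta'$ nor the extra ``large $n$'' conditions.
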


\begin{proof}
Choose an integer $n_0>\kappa$ after which the recursion holds, and
choose $\widehat c$ so  that $u_n\le \widehat c\, n^{-\theta}$ 
for $1\le n\le n_0$ and $\gamma \le \widehat c\, (\kappa-\theta)$. 
If $u_n\le \widehat c\, n^{-\theta}$ for $n\ge n_0$, then
$$
u_{n+1}\le \left(1-\frac{\kappa}{n}\right)\, \widehat c\, n^{-\theta} 
+ \frac{\widehat c\, (\kappa-\theta)}{n^{1+\theta}}
=\widehat c\, n^{-\theta}\left(1-\frac{\theta}{n}\right)
\le \widehat c\, (n+1)^{-\theta},
$$
where the last inequality follows from
$(1+t)^{-\theta}\ge1-\theta t$. The claim follows by induction.
\end{proof}

\begin{proof}[Proof of Theorem~\ref{thm:rate}]
Throughout the proof, $K<\infty$ denotes a deterministic constant,
independent of $n$, that may change from line to line.

\vspace{-10pt}
\paragraph{Step 1: Error measurement.} 
In this step, we rewrite the error 
$\mathsf A^n+\mathsf V^n$ in a suitable form.
To balance the policy and value errors, we choose a constant
$s_\star>L_\star/\mu_h$ as follows. Set $f(s):=\max\left\{L_\star/(\mu_hs)\ ;\ \beta_\star+L_hs\right\}$. 
In view of the contraction condition \eqref{eq:contraction}
$\kappa_h=\beta_\star+L_h (L_\star/\mu_h)<1$,
and therefore there is $s_\star >0$ satisfying $f(s_\star)<1$.
Set
$$
q_\star:=f(s_\star)= \max\left\{L_\star/(\mu_hs_\star)\ ;\ \beta_\star+L_hs_\star\right\}<1.
$$

Let $\zeta>0$ be the ergodicity constant in Assumption~\ref{asm:accessibility}. 
Choose an even integer $p\ge4$ satisfying
$$
\frac{q_\star(N+1)}{p}\le\frac{\zeta(1-q_\star)}4,
\quad \text{set}\quad 
\omega_\star:=\frac1{\zeta(1-q_\star)},\quad \text{and fix}\quad \omega>\omega_\star.
$$
Consider the point-wise errors at step $n$ given by
$$
e_n(x):=\frac{1}{s_\star} [\widehat\alpha^n(x)-\alpha^\star(x)],
\qquad
v_n^i(x):=\widehat V^{i,n}(x)-V^{i,\star}(x),\qquad x \in \cX,
$$
and define point-wise and total errors by
$$
s_n(x):=\|e_n(x)\|^p+\sum_{i=1}^N|v_n^i(x)|^p,
\qquad S_n:=\sum_{x\in\cX}s_n(x),\qquad x \in \cX.
$$
Then, $\|e_n(x)\|, |v_n^i(x)| \le S_n^{1/p}$ for each $n,i,x$.
Since $\E[|Z|^2]\le\E[|Z|^p]^{2/p}$ by Jensen's inequality, we have
\begin{equation}
\mathsf A^n+\mathsf V^n
\le |\cX| (s_\star^2+N)\, \E[S_n]^{2/p}.
\label{eq:rate-error-comparison}
\end{equation}
Therefore, it suffices to prove $\E[S_n]\le Kn^{-p/4}$. 
The remaining proof establishes this inequality.

\vspace{-10pt}
\paragraph{Step 2: Iteration error.}
We claim that, for every $\varepsilon>0$, there is $K_\varepsilon<\infty$
such that
\begin{equation}
\label{eq:rate-one-step}
\E[S_{n+1}\mid\cF_n] \le \left(1+\frac{\omega q_\star(N+1)
+\varepsilon}{n}\right)S_n -\frac{p\omega(1-q_\star)}n s_n(X_n)
+K_\varepsilon n^{-1-p/4}.
\end{equation}
Indeed, at step $n$ the updates change only at $X_n$. Hence,
$S_{n+1} -S_n = s_{n+1}(X_n)-s_n(X_n)$.
Put $\fz=X_n$ and observe that 
$$
s_{n+1}(\fz)=s_n(\fz) +\|e_{n+1}(\fz)\|^p -\|e_n(\fz)\|^p
+ \sum_{i=1}^N\big(|v_{n+1}^i(\fz)|^p-|v_n^i(\fz)|^p\big).
$$
We estimate the policy and value increments separately.
As the projections are non-expansive, the definitions of the updates imply
$$
\|e_{n+1}(\fz)\| \le\big\|e_n(\fz)+\frac{\eta_n}{s_\star} \widehat D^{n+1}\big\|,\qquad
|v_{n+1}^i(\fz)| =\big|v_n^i(\fz)+\nu_n
(\widetilde V^{i,n+1}-\widehat V^{i,n}(\fz))\big|.
$$
We bound these terms using the estimate for some constant $K$,
\begin{equation}
\label{eq:rate-proof-taylor}
\|y+t\|^p
\le\|y\|^p+p\|y\|^{p-2}\ip{y}{t}
+K\bigl(\|y\|^{p-2}\|t\|^2+\|t\|^p\bigr),
\end{equation}
which follows from 
Taylor's formula applied to $\|\cdot\|^p$, for any
vectors $y,t$ of the same dimension. To apply this, we next estimate the cross-terms that appear.

Firstly, \eqref{eq:policy-coercivity} gives
$$
\frac1{\mu_hs_\star}
\ip{e_n(\fz)}{\Phi[V^\star](\fz,\widehat\alpha^n(\fz))}
\le-\|e_n(\fz)\|^2.
$$
Adding and subtracting the pseudo-gradient and using the equation 
\eqref{eq:equilibrium-value}, the coercivity \eqref{eq:policy-coercivity}, 
Cauchy--Schwarz and \eqref{eq:gradient-mean}, yield
\begin{equation}
\begin{aligned}
\frac1{\mu_h s_\star}\ip{e_n(\fz)}{\E[\widehat D^{n+1}\mid\cF_n]}
&\le-\|e_n(\fz)\|^2
+\frac{\|e_n(\fz)\|}{\mu_h s_\star}
\bigl\|\E[\widehat D^{n+1}\mid\cF_n]
-\Phi[V^\star](\fz,\widehat\alpha^n(\fz))\bigr\|\\
&\le-\|e_n(\fz)\|^2
+\frac{L_\star}{\mu_h s_\star}\|e_n(\fz)\|
\|\widehat V^n-V^\star\|_\infty +K(\delta_n+\lambda_n)\|e_n(\fz)\|\\
&\le-\|e_n(\fz)\|^2 +q_\star \|e_n(\fz)\|S_n^{1/p}
+K(\delta_n+\lambda_n)\|e_n(\fz)\|.
\end{aligned}\label{eq:rate-proof-bound-action}
\end{equation}
The last inequality uses $L_\star/(\mu_h s_\star)\le q_\star$ and
$\|\widehat V^n-V^\star\|_\infty\le S_n^{1/p}$. 

Similarly,
\eqref{eq:value-target-bound} gives
\begin{align}
\nonumber
v_n^i(\fz)\bigl(\E[\widetilde V^{i,n+1}\mid\cF_n] -\widehat V^{i,n}(\fz)\bigr)
&=-|v_n^i(\fz)|^2 +v_n^i(\fz)\bigl(\E[\widetilde V^{i,n+1}\mid\cF_n] -V^{i,\star}(\fz)\bigr)\\
\nonumber
&\le-|v_n^i(\fz)|^2 +|v_n^i(\fz)|\big( \beta_\star\|\widehat V^n-V^\star\|_\infty
+L_hs_\star\ \|e_n(\fz)\|+K(\delta_n+\lambda_n)\big)\\
&\le-|v_n^i(\fz)|^2 +q_\star |v_n^i(\fz)|S_n^{1/p} +K(\delta_n+\lambda_n)|v_n^i(\fz)|.
\label{eq:rate-proof-bound-value}
\end{align}
The last inequality uses
$\|\widehat V^n-V^\star\|_\infty\le S_n^{1/p}$,
$\|e_n(\fz)\|\le S_n^{1/p}$, and $\beta_\star+L_hs_\star \le q_\star$.

We apply the Taylor estimate \eqref{eq:rate-proof-taylor} to obtain
\begin{align*}
\E[S_{n+1}\mid\cF_n] \le S_n
&+\frac{p\omega}{n}\|e_n(\fz)\|^{p-2} \frac1{\mu_hs_\star}
\ip{e_n(\fz)}{\E[\widehat D^{n+1}\mid\cF_n]}\\
&+\frac{p\omega}{n}\sum_i|v_n^i(\fz)|^{p-2}v_n^i(\fz)
\bigl(\E[\widetilde V^{i,n+1}\mid\cF_n] -\widehat V^{i,n}(\fz)\bigr)\\
&+\frac{K}{n^2\delta_n^2} \Big(\|e_n(\fz)\|^{p-2}+\sum_i|v_n^i(\fz)|^{p-2}\Big)
+\frac{K}{n^p\delta_n^p}.
\end{align*}
Here we used $\eta_n=\omega/(\mu_hn)$, $\nu_n=\omega/n$,
and $\delta_n\le1$ eventually. We next plug in the cross-term estimates \eqref{eq:rate-proof-bound-action} and
\eqref{eq:rate-proof-bound-value} in the first two lines.
The result is
$$
\begin{aligned}
&\E[S_{n+1}\mid\cF_n]
\le S_n\,\underbrace{-\,\frac{p\omega}{n}s_n(\fz)+\frac{p\omega q_\star}{n}S_n^{1/p}
\Big(\|e_n(\fz)\|^{p-1}+\sum_i|v_n^i(\fz)|^{p-1}\Big)}_{\mathrm{(A)}}\\
&\qquad +\underbrace{\frac{K(\delta_n+\lambda_n)}{n}
\Big(\|e_n(\fz)\|^{p-1} + \sum_i|v_n^i(\fz)|^{p-1}\Big)}_{\mathrm{(B)}}+
\underbrace{\frac{K}{n^2\delta_n^2}
\Big(\|e_n(\fz)\|^{p-2}+\sum_i|v_n^i(\fz)|^{p-2}\Big)}_{\mathrm{(C)}}
+\underbrace{\frac{K}{n^p\delta_n^p}}_{\mathrm{(D)}}.
\end{aligned}
$$
We first consider the terms in (A). By Young's inequality, for any $r>0$,
$$
r^{p-1}S_n^{1/p}
\le\frac{p-1}{p}r^p+\frac1pS_n.
$$
Applying this to $r=\|e_n(\fz)\|,|v_n^1(\fz)|,\ldots,|v_n^N(\fz)|$
and adding these inequalities shows that
\begin{align*}
\mathrm{(A)} &\le
-\frac{\omega[p-q_\star(p-1)]}{n}s_n(\fz)
+\frac{\omega q_\star(N+1)}{n}S_n\le
-\frac{p\omega(1-q_\star)}{n}s_n(\fz)
+\frac{\omega q_\star(N+1)}{n}S_n.
\end{align*}
The last inequality uses $p-q_\star(p-1)=p(1-q_\star)+q_\star$.

We continue estimating (B)--(D). Since every error is at most $S_n^{1/p}$, 
the chosen learning schedules imply
$$
\begin{aligned}
\mathrm{(B)} \le Kn^{-5/4}S_n^{1-1/p},\qquad 
\mathrm{(C)}
\le Kn^{-3/2}S_n^{1-2/p},\qquad \mathrm{(D)}\le Kn^{-3p/4},
\end{aligned}
$$
after increasing $K$ if necessary.
For any $\varepsilon>0$, Young's inequality further gives $K_\varepsilon<\infty$ with
$$
\begin{aligned}
\mathrm{(B)} \le Kn^{-5/4}S_n^{1-1/p}
\le\frac{\varepsilon}{2n}S_n
+K_\varepsilon n^{-1-p/4},\qquad 
\mathrm{(C)} \le Kn^{-3/2}S_n^{1-2/p}
\le\frac{\varepsilon}{2n}S_n
+K_\varepsilon n^{-1-p/4}.
\end{aligned}
$$
Finally, since $p\ge4$, we have $3p/4\ge1+p/4$, and hence
$$
\mathrm{(D)}\le Kn^{-3p/4}\le Kn^{-1-p/4}.
$$
Thus, (D) can be absorbed into the remainder $K_\varepsilon n^{-1-p/4}$ by 
increasing $K_\varepsilon$. Combining the bounds yields
\eqref{eq:rate-one-step} for sufficiently large $n$.

\vspace{-10pt}
\paragraph{Step 3: Total error.}
We next claim that, for each $n$,  the following reverse inequality holds:
\begin{equation}
\E[s_n(X_n)]\ge\frac{\zeta}{2}\ \E[S_n]-Kn^{-3p/4}.
\label{eq:rate-visited-error}
\end{equation}
We first consider $e_n$.  Since $\widehat\alpha^{n-1}(x) \in \cU$,
its projection is equal to itself and therefore
$$
s_\star(e_n(x)-e_{n-1}(x)) = \Proj_{\cU}(\widehat\alpha^{n-1}(x)
+\one_{\{X_{n-1}=x\}}\eta_{n-1}\widehat D^n) - \Proj_{\cU}(\widehat\alpha^{n-1}(x)).
$$
This implies that
$$
\|e_n(x)-e_{n-1}(x)\| \le  \frac{\eta_{n-1}}{s_\star} \|\widehat D^n\|
\le K \frac{\eta_{n-1}}{\delta_{n-1}},
$$
for some constant $K$.
For the values, the update formula gives
$$
|v_n^i(x)-v_{n-1}^i(x)| =\nu_{n-1} \one_{\{X_{n-1}=x\}}
\bigl|\widetilde V^{i,n}-\widehat V^{i,n-1}(x)\bigr|
\le2\overline M\,\nu_{n-1}.
$$
Using the learning rates \eqref{eq:rate-schedules}, we obtain
\begin{align*}
&\max_x\big(\|e_n(x)-e_{n-1}(x)\|
+\max_i|v_n^i(x)-v_{n-1}^i(x)|\big)\\
&\qquad \qquad \le
K(\frac{\eta_{n-1}}{\delta_{n-1}}+\nu_{n-1})\le
K\big((n-1)^{-3/4}+(n-1)^{-1}\big)
\le Kn^{-3/4}.
\end{align*}
Using Young's inequality $\|y+t\|^p\le\tfrac43\|y\|^p+K\|t\|^p$, we obtain
$$
\begin{aligned}
\|e_{n}(x)\|^p
&=\|e_{n-1}(x)+(e_{n}(x)-e_{n-1}(x))\|^p
\le \tfrac43 \|e_{n-1} (x)\|^p+Kn^{-3p/4}.
\end{aligned}
$$
The same argument gives $|v_n^i(x)|^p \le \tfrac43 |v_{n-1}^i(x)|^p+Kn^{-3p/4}$.
Adding over the policy and value errors, and increasing $K$ when needed, yields 
$s_{n}(x)\le \tfrac43 s_{n-1}(x) + Kn^{-3p/4}$.
Simple algebra  implies that
$s_{n-1}(x)\ge \tfrac34 s_n(x) - Kn^{-3p/4}$  for every $\ x\in \cX$.
Summing this inequality over $x\in\cX$ and again  increasing $K$ when needed, gives
$S_{n-1}\ge\tfrac34 S_n  -  Kn^{-3p/4}$.

Now we rearrange the terms differently and then use Young's inequality to obtain
$$
\|e_{n-1}(x)\|^p =\|e_n(x)+(e_{n-1}(x)-e_n(x))\|^p
\le \tfrac43 \|e_n (x)\|^p+Kn^{-3p/4}.
$$
This argument also gives $|v_{n-1}^i(x)|^p \le \tfrac43 |v_n^i(x)|^p+Kn^{-3p/4}$,
and arguing exactly as before
we obtain $s_n(x)\ge \tfrac34 s_{n-1}(x)-  Kn^{-3p/4}$  for every 
$\ x\in \cX$.  Thus, we have shown the following inequalities:
$$
S_{n-1}\ge\tfrac34 S_n - Kn^{-3p/4}, \qquad
s_{n}(x)\ge \tfrac34 s_{n-1}(x) - Kn^{-3p/4}, \ \ \forall\ x\in \cX.
$$

In Step~2 of the proof of Theorem~\ref{thm:convergence} we have
directly shown that  $\P(X_n=x\mid\cF_{n-1})\ge\zeta$.
Since $s_{n-1}(x)$ is $\cF_{n-1}$-measurable,
we directly estimate that
$$
\begin{aligned}
\E[s_n(X_n)]&\ge \frac34 \E[s_{n-1}(X_n)]-Kn^{-3p/4}
\ge\frac34\E\Big[\sum_x \P(X_n=x\mid\cF_{n-1})s_{n-1}(x)\Big]-Kn^{-3p/4}\\
& \ge \frac{3\zeta}{4}\E[S_{n-1}]-Kn^{-3p/4}
\ge\frac{9\zeta}{16}\ \E[S_n]-Kn^{-3p/4}
\ge\frac{\zeta}{2}\ \E[S_n]-Kn^{-3p/4}.
\end{aligned}
$$

\vspace{-10pt}
\paragraph{Step 4: Conclusion.}
Set $U_n:=\E[S_n]$. Taking expectations in
\eqref{eq:rate-one-step} and applying
\eqref{eq:rate-visited-error}, we obtain, for all sufficiently large $n$,
\begin{align*}
U_{n+1}
&\le \left(1+\frac{\omega q_\star(N+1)+\varepsilon}{n}\right)U_n
-\frac{p\omega(1-q_\star)}{n}
\left(\frac{\zeta}{2}U_n-Kn^{-3p/4}\right)
+K_\varepsilon n^{-1-p/4}\\
&= \left(1-\frac{p\omega\zeta(1-q_\star)/2-\omega q_\star(N+1)-\varepsilon}{n}\right)U_n
+K_\varepsilon n^{-1-p/4} +p\omega(1-q_\star)K n^{-1-3p/4}.
\end{align*}
Since $p>0$ and $n\ge1$, we have
$n^{-1-3p/4}\le n^{-1-p/4}$, so the last term can be
absorbed into the second last by increasing $K_\varepsilon$. 
Moreover, our choice of $p$ gives
$$
\omega q_\star(N+1)\le\frac{p\omega\zeta(1-q_\star)}4,
\qquad \Longrightarrow\qquad 
\frac{p\omega\zeta(1-q_\star)}2-\omega q_\star(N+1)-\varepsilon
\ge\frac{p\omega\zeta(1-q_\star)}4-\varepsilon.
$$
Since $U_n\ge0$, we therefore can conclude that
$$
U_{n+1} \le\left(1-\frac{p\omega\zeta(1-q_\star)/4-\varepsilon}{n}\right)U_n
+K_\varepsilon n^{-1-p/4}.
$$
Because $\omega>\omega_\star=1/(\zeta(1-q_\star))$, we can choose
$\varepsilon>0$ so that $p\omega\zeta(1-q_\star)-4 \varepsilon>p$.
We apply Lemma~\ref{lem:polynomial-comparison} with $\theta=p/4$ to
conclude that $U_n\le Kn^{-p/4}$. Together with \eqref{eq:rate-error-comparison}, this establishes the claim \eqref{eq:mean-square-rate}.

Each squared maximum in \eqref{eq:max-rate} is bounded by the
corresponding sum in $\mathsf A^n+\mathsf V^n$. Finally,
\eqref{eq:perturbation-distance} gives
\begin{equation*}
\E\bigl[\|A_n-\alpha^\star(X_n)\|^2\bigr]
\le2\mathsf A^n+K(\delta_n+\lambda_n)^2
\le\frac K{\sqrt n}.
\qedhere
\end{equation*}
\end{proof}

\section{Episodic learning}
\label{sec:episodic}

This section discusses the convergence 
guarantee for the episodic algorithm given in Subsection \ref{ss:episodic}.

\vspace{-10pt}
\paragraph{Assumptions.} We retain the analogous assumptions of the discounted problem. 
Specifically, we assume that  the time-dependent versions of 
Assumptions~\ref{asm:rewards} and~\ref{asm:transition} hold uniformly in $t$, and $|g^i(x)|\le C_g$. Put
{\begin{equation*}
M_T:=C_g+ThC_r,
\qquad
\mu_T:=h(\mu- hN M_Tc_2).
\end{equation*}}
As in Assumption~\ref{asm:gap}, we assume that
$\mu_T>0$, but the contraction condition $\kappa_h<1$ is \emph{not} required.

With deterministic initial tables, let $\cE_0:=\{\varnothing,\Omega\}$ and 
for $n\ge0$, $t<T$, recursively define
$$
\cF_0^n:=\cE_n\vee\sigma(X_0^n),\quad
\cG_t^n :=\cF_t^n\vee\sigma(\Delta_t^n),\quad
\cF_{t+1}^n :=\cG_t^n\vee\sigma(\fR_{t+1}^n,X_{t+1}^n),\quad
\cE_{n+1}:=\cF_T^n\vee\sigma(\fg^n).
$$
Thus $\cE_n$ contains the completed history before episode $n$, including the current tables. 
The updated tables are $\cE_{n+1}$-measurable. Conditional on $\mathcal F_t^n$, the perturbations $(\Delta_t^{i,n})_{i=1}^N$ are independent across agents and satisfy the moment conditions in $\eqref{eq:direction-moments}$. Almost surely, for every $n\ge0$, $t<T$, $i\in[N]$, and $y\in\cX$, the observations satisfy
$$
\P(X_{t+1}^n=y\mid\cG_t^n)=\P_t(y\mid X_t^n,A_t^n),\qquad
\E[\fR_{t+1}^{i,n}\mid\cG_t^n]=hr_t^i(X_t^n,A_t^n),
\qquad |\fR_{t+1}^{i,n}|\le hC_r,
$$
where $A_t^n=(A_t^{i,n})_{i=1}^N$ is the joint action. 
For the terminal reward, we assume that
$$
\E[\fg^{i,n}\mid\cF_T^n]=g^i(X_T^n),
\qquad  |\fg^{i,n}|\le C_g,
$$
for some constant $C_g$. Assume also that, for a deterministic $\zeta_T>0$,
$$
\P(X_t^n=x\mid\cE_n)\ge\zeta_T, \qquad \forall\, t\le T, x\in\cX, n\ge0.
$$
The numerical sequences satisfy \eqref{eq:basic-sequences}--\eqref{eq:value-sequences}, 
now indexed by episodes. We initialize deterministic feasible policies and value tables satisfying
$$
|\widehat V_t^{i,0}(x)|\le C_g+(T-t)hC_r.
$$

With these assumptions, we obtain the following result.
\begin{Thm}[Episodic equilibrium and learning]
\label{thm:episodic}
Under the assumptions stated above, there is a unique finite-horizon 
MPE $(\alpha^\star,V^\star)$, characterized by \eqref{eq:episodic-nash-system}. Its values satisfy
$$
|V_t^{i,\star}(x)|\le C_g+(T-t)hC_r.
$$
The episodic algorithm converges almost surely:
\begin{equation}
\max_{t<T}\|\widehat\alpha_t^n-\alpha_t^\star\|_{\infty,2}
+\max_{t\le T}\|\widehat V_t^n-V_t^\star\|_\infty
\longrightarrow0.
\label{eq:episodic-convergence}
\end{equation}
\end{Thm}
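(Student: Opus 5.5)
The plan is to argue by backward induction in $t$, replacing the contraction argument of Theorems~\ref{thm:mpe} and~\ref{thm:convergence} with a finite recursion from the terminal time. This is why $\kappa_h<1$ is not needed.

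\emph{Existence and uniqueness of the equilibrium.} Set $V_T^\star=g$. Suppose $V_{t+1}^\star$ has been constructed with $\|V_{t+1}^\star\|_\infty\le C_g+(T-t-1)hC_r\le M_T$. The time-dependent analogue of Lemma~\ref{lem:bellman}, with $\beta^i=1$ and $M_\star$ replaced by $M_T$, shows that $-\Phi_t[V_{t+1}^\star](x,\cdot)$ is $\mu_T$-strongly monotone. Hence the Brouwer/projection argument from the proof of Theorem~\ref{thm:mpe} gives a unique solution $\alpha_t^\star(x)$ of the variational inequality in \eqref{eq:episodic-nash-system}. We then define $V_t^{i,\star}(x)=Q_t^i[V_{t+1}^\star](x,\alpha_t^\star(x))$. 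The bound $|V_t^{i,\star}|\le hC_r+C_g+(T-t-1)hC_r$ closes the induction. A standard verification argument identifies this pair as the unique Markov perfect equilibrium, exactly as in Theorem~\ref{thm:mpe}.

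\emph{Convergence.} The plan is a backward induction on the claim
\[
\widehat V_{t}^n\to V_{t}^\star \text{ a.s.}, \qquad \widehat\alpha_{t}^n\to\alpha_t^\star \text{ a.s.}
\]
First, the induction of Step~1 of Theorem~\ref{thm:convergence} yields the uniform bounds $|\widehat V_t^{i,n}|,|\widetilde V_t^{i,n+1}|\le C_g+(T-t)hC_r$. Second, the martingale argument of Step~2 applies with the filtration $(\cE_n)$ and the lower bound $\P(X_t^n=x\mid\cE_n)\ge\zeta_T$. It gives $\sum_n\eta_n\one_{\{X_t^n=x\}}=\infty$ and $\sum_n\nu_n\one_{\{X_t^n=x\}}=\infty$ a.s. Third, the SA recursion is indexed by episodes: its increment $\one_{\{X^n_t=x\}}\eta_n\widehat D_t^{n+1}$ is $\cE_{n+1}$-measurable, and its conditional mean is computed given $\cE_n$.

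At $t=T$, the value update at $x$ is a Robbins--Monro average of samples $\fg^{i,n}$ with $\E[\fg^{i,n}\mid\cF_T^n]=g^i(X_T^n)$. Proposition~\ref{prop:sa} with $b=0$ gives $\widehat V_T^n\to g$ a.s.

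Now assume $\widehat V_{t+1}^n\to V_{t+1}^\star$ a.s.

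For the policy at time $t$, apply Lemma~\ref{lem:gradient} with $\sF=\cF_t^n$ and $W=\widehat V_{t+1}^n$; this $W$ is $\cE_n\subset\cF_t^n$-measurable since tables are frozen during an episode. Together with \eqref{eq:Phi-W-lip}, this shows that the bias relative to $\Phi_t[V^\star_{t+1}](x,\widehat\alpha_t^n(x))$ is at most $L_\star\|\widehat V_{t+1}^n-V_{t+1}^\star\|_\infty+K(\delta_n+\lambda_n)$, and this tends to $0$. Taking conditional expectations given $\cE_n$ on the event $\{X_t^n=x\}$, Proposition~\ref{prop:sa} with $b=0$ and coercivity constant $\mu_T$ yields $\widehat\alpha_t^n(x)\to\alpha_t^\star(x)$.

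For the value at time $t$, the computation behind \eqref{eq:value-target-bound} gives the bias estimate
\[
|\E[\widetilde V_t^{i,n+1}\mid\cF_t^n]-V_t^{i,\star}(x)|\le\|\widehat V_{t+1}^n-V_{t+1}^\star\|_\infty+L_Q\|\widehat\alpha_t^n(x)-\alpha_t^\star(x)\|+K(\delta_n+\lambda_n),
\]
and the right-hand side now tends to $0$. Proposition~\ref{prop:sa} with $\mu=1$ and $b=0$ then gives $\widehat V_t^n\to V_t^\star$, which completes the induction.

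\emph{Main obstacle.} The main obstacle is the measurability bookkeeping needed to fit Proposition~\ref{prop:sa}. The natural SA filtration is episodic, $(\cE_n)$, while the gradient lemma and the bias bounds are conditional on $\cF_t^n$, and the indicator $\one_{\{X_t^n=x\}}$ is not $\cE_n$-measurable. I would first try to apply Proposition~\ref{prop:sa} along the interleaved filtration
\[
\sF_n:=\cF_t^n, \qquad \sF_{n+1}:=\cF_t^{n+1}\supset\cE_{n+1}.
\]
This is legitimate for two reasons. First, $\widehat\alpha_t^{n+1}(x)$ depends only on episode-$n$ data, so it is $\cE_{n+1}\subset\cF_t^{n+1}$-measurable. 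Second, $k_n=\eta_n\one_{\{X_t^n=x\}}$ and $b_n$ are $\cF_t^n$-measurable, and the noise has zero mean given $\cF_t^n$. With this choice the earlier arguments transfer verbatim. The backward induction itself is straightforward once this is settled, because the limsup bias at each stage is zero, so no contraction across stages is required.
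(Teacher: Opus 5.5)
Your proposal is correct and follows the same route the paper indicates. The paper only states that the result follows by backward induction from the estimates in the proof of Theorem~\ref{thm:convergence} and Proposition~\ref{prop:sa}, and your version fills this in without a gap: the terminal values converge first, then at each $t$ the policy bias $L_\star\|\widehat V_{t+1}^n-V_{t+1}^\star\|_\infty+K(\delta_n+\lambda_n)$ and afterwards the value bias tend to $0$, so Proposition~\ref{prop:sa} applies with $b=0$ at every stage and no cross-stage contraction is needed; your interleaved filtration $\sF_n:=\cF_t^n$ is legitimate because $\cF_t^n\subset\cE_{n+1}\subset\cF_t^{n+1}$, the iterate, the step $\eta_n\one_{\{X_t^n=x\}}$ and the bias are $\cF_t^n$-measurable, and the noise is $\cF_t^{n+1}$-measurable with zero conditional mean given $\cF_t^n$.
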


The proof of Theorem~\ref{thm:episodic} follows by backward induction, 
using the estimates in the proof of Theorem~\ref{thm:convergence} and 
Proposition~\ref{prop:sa}, and is omitted.

\appendix

\section{Proof of the estimates}
\label{app:bellman}

\begin{proof}[Lemma~\ref{lem:bellman}]
Let $e:=a-b$ and fix $i\in[N]$. For $\|v^i\|\le1$, by the fundamental theorem of calculus,
\begin{align*}
\ip{\nabla_{a^i}\mathsf P [W^i](x,a)-\nabla_{a^i}\mathsf P [W^i](x,b)}{v^i}=\sum_{j=1}^N\int_0^1\sum_{y\in\cX}W^i(y)
D_{ji}^2\P(y\mid x,b+\theta e)[e^j,v^i]\,\d\theta.
\end{align*}
By \eqref{eq:transition}, the absolute value is at most
$h^2c_2\, \fm \sum_j\|e^j\|\,\|v^i\|$. Taking the supremum over $\|v^i\|\le1$ gives
$$
\|\nabla_{a^i}\mathsf P [W^i](x,a)-\nabla_{a^i}\mathsf P [W^i](x,b)\|
\le  h^2c_2 \fm \sum_{j=1}^N\|e^j\|.
$$
Multiplying each agent's component by $\beta^i$ and taking the
Euclidean norm across agents gives
$$
\Big(\sum_{i=1}^N(\beta^i)^2
\|\nabla_{a^i}\mathsf P [W^i](x,a)-\nabla_{a^i}\mathsf P [W^i](x,b)\|^2\Big)^{1/2}\le\sqrt N\,\beta_\star 
h^2c_2 \fm \sum_{j=1}^N\|e^j\|
\le\beta_\star N h^2c_2 \fm \|e\|,
$$
where we used $\sum_j\|e^j\|\le\sqrt N\|e\|$.
Hence, the continuation contribution to \eqref{eq:Phi-action-lip} is at most 
$\beta_\star N h^2c_2 \fm \|e\|$. 
The running-reward contribution is at most $hL_F\|e\|$. 
This establishes \eqref{eq:Phi-action-lip}.

Similarly, \eqref{eq:transition} gives
$$
\|\beta^i\nabla_{a^i}\mathsf P[W^i-\widetilde W^i](x,a)\| \le\beta^i hc_1\|W-\widetilde W\|_\infty.
$$
Taking the Euclidean norm across agents proves \eqref{eq:Phi-W-lip}.
A line-segment argument as before gives
$$
|\beta^i\mathsf P [W^i](x,a)-\beta^i\mathsf P [W^i](x,b)| \le\beta_\star  hc_1\fm \sum_{j=1}^N\|e^j\|
\le\beta_\star\sqrt N  hc_1\fm \|e\|,
$$
which, together with \eqref{eq:reward-bounds}, proves \eqref{eq:Q-action-lip}.

Finally, when $\|W\|_\infty\le M_\star$, the absolute value of the continuation 
contribution to the inner product in \eqref{eq:bellman-monotonicity} is at most
$$
\beta_\star M_\star h^2c_2\sum_{i,j}\|e^i\|\|e^j\|
\le\beta_\star NM_\star h^2c_2\|e\|^2.
$$
Combining this with \eqref{eq:running-monotonicity} proves the assertion.
\end{proof}

\section{Proof of the gradient-estimate}
\label{app:gradient}

\begin{proof}[Proof of Lemma~\ref{lem:gradient}]
Fix $i\in[N]$ and set $\fm:=\|W\|_\infty$, $\widetilde a^i=(1-\lambda)a^i$,
$A^i=\widetilde a^i + \delta \Delta^i$,
where $a \in \cU$ is the policy fixed in the statement.
Since $\E[\widetilde V^i\mid\sG]=Q^i[W](x,A)$ by hypothesis,
and $\Delta^i$ is $\sG$-measurable,
\begin{equation}
\E[\widehat D^i\mid\sF]
=\frac{d_i}{\delta}\E\bigl[\Delta^iQ^i[W](x,A)\mid\sF\bigr].
\label{eq:one-point-tower}
\end{equation}
Because $\Delta^i$ is conditionally independent of $A^{-i}$ and has conditional mean zero,
$$
\E\bigl[\Delta^iQ^i[W](x,\widetilde a^i,A^{-i})\mid\sF\bigr]=0.
$$
Taylor's formula in agent $i$'s action, together with \eqref{eq:Phi-action-lip}, gives
$$
Q^i[W](x,\widetilde a^i+\delta\Delta^i,A^{-i})
=Q^i[W](x,\widetilde a^i,A^{-i})+\delta\ip{\Phi^i[W](x,\widetilde a^i,A^{-i})}{\Delta^i}+R_i,
$$
where $|R_i|\le L_\Phi(\fm)\delta^2\|\Delta^i\|^2/2$. 

We use \eqref{eq:one-point-tower} and the conditional independence and 
the covariance assumptions to arrive at
$$
\E[\widehat D^i\mid\sF]
=d_i\E\bigl[\Delta^i(\Delta^i)^\top
\Phi^i[W](x,\widetilde a^i,A^{-i})\mid\sF\bigr]+\varepsilon_i,
$$
where
$$
\varepsilon_i:=\frac{d_i}{\delta}\E[\Delta^iR_i\mid\sF]
\qquad \Rightarrow \qquad
\|\varepsilon\|\le \frac{L_\Phi(\fm)}{2} (\sum_{i=1}^Nd_i^2)^{1/2} \ \delta \le K \delta,
$$
for some constant $K$.
Moreover, the gradient $\Phi^i[W](x,\widetilde a^i,A^{-i})$
is measurable with respect to the sigma-field $\sF\vee\sigma(A^{-i})$ 
and conditionally independent of $\Delta^i$.
Since  $d_i\,\E[\Delta^i(\Delta^i)^\top\mid\sF]=I_{d_i}$,
we have
$$
d_i\,\E\bigl[\Delta^i(\Delta^i)^\top \Phi^i[W](x,\widetilde a^i,A^{-i})\mid\sF\bigr]
= \E\bigl[ \Phi^i[W](x,\widetilde a^i,A^{-i}) \mid \sF \bigr].
$$

Since $\widetilde a=(1-\lambda)a$ and $\|\Delta^j\|\le1$,
$\|(\widetilde a^i,A^{-i})-a\|
\le\lambda\max_{u\in\cU}\|u\|+\sqrt N \delta$.
We now apply the Lipschitz estimate \eqref{eq:Phi-action-lip} to each component 
of the gradient to obtain the following estimate
\begin{align*}
\|\E[\widehat D\mid\sF]-\Phi[W](x,a)\|  &\le \Big(
\sum_{i=1}^N \| \E[ \Phi^i[W](x,\widetilde a^i,A^{-i}) -\Phi^i[W](x,a) \mid\sF]\|^2
\Big)^{1/2}  + \|\varepsilon\|\\
& \le \sqrt N L_\Phi(\fm)
\Big(\lambda\max_{u\in\cU}\|u\|+\sqrt N \delta\Big)
+K \delta \le K(\delta+\lambda),
\end{align*}
where in the final inequality we increase $K$ if necessary.

Finally, with $\xi=\widehat D-\E[\widehat D\mid\sF]$,
\begin{equation*}
\E[\|\xi\|^2\mid\sF] \le\E[\|\widehat D\|^2\mid\sF]
\le \|\widetilde V\|^2_{L^\infty} \sum_{i=1}^Nd_i^2 / \delta^2.
\qedhere
\end{equation*}
\end{proof}

\bibliographystyle{abbrvnat}
\bibliography{MARL}

\end{document}